\def\eqref#1{equation~\ref{#1}}
\def\Eqref#1{Equation~\ref{#1}}
\def\1{\bm{1}}
\def\va{{\bm{a}}}
\def\vs{{\bm{s}}}
\def\mI{{\bm{I}}}
\DeclareMathAlphabet{\mathsfit}{\encodingdefault}{\sfdefault}{m}{sl}
\SetMathAlphabet{\mathsfit}{bold}{\encodingdefault}{\sfdefault}{bx}{n}
\def\gB{{\mathcal{B}}}
\def\gD{{\mathcal{D}}}
\def\gL{{\mathcal{L}}}
\def\gN{{\mathcal{N}}}
\newcommand{\E}{\mathbb{E}}
\newcommand{\cbr}[1]{\left\{#1\right\}}
\newcommand{\TR}{\text{TR}}
\newcommand{\KL}{\text{KL}}
\newtheorem{remark}{Remark}
\newtheorem{theorem}{Theorem}
\newcommand{\apptitle}[1]{
    \def\toptitlebar{\hrule height 4pt \vskip 0.25in} 
    \def\bottomtitlebar{\vskip 0.25in \hrule height 1pt \vskip 0.25in} 
    \toptitlebar 
    {\centering \LARGE \textbf{#1} \par} 
    \bottomtitlebar 
}
\title{\textbf{Diffusion Policies Creating a Trust Region for\\ Offline Reinforcement Learning}}
\author{%
Tianyu Chen \qquad Zhendong Wang \qquad Mingyuan Zhou\\
  The University of Texas at Austin\\
  \texttt{\{tianyuchen, zhendong.wang\}@utexas.edu}\\
  \texttt{mingyuan.zhou@mccombs.utexas.edu}
}
\begin{document}

\maketitle

\begin{abstract}
Offline reinforcement learning (RL) leverages pre-collected datasets to train optimal policies. Diffusion Q-Learning (DQL), introducing diffusion models as a powerful and expressive policy class, significantly boosts the performance of offline RL. However, its reliance on iterative denoising sampling to generate actions slows down both training and inference. While several recent attempts have tried to accelerate diffusion-QL, the improvement in training and/or inference speed often results in degraded performance. In this paper, we introduce a dual policy approach, Diffusion Trusted Q-Learning (DTQL), which comprises a diffusion policy for pure behavior cloning and a practical one-step policy. We bridge the two polices by a newly introduced diffusion trust region loss. The diffusion policy maintains expressiveness, while the trust region loss directs the one-step policy to explore freely and seek modes within the region defined by the diffusion policy. DTQL eliminates the need for iterative denoising sampling during both training and inference, making it remarkably computationally efficient. We evaluate its effectiveness and algorithmic characteristics against popular Kullback--Leibler divergence-based distillation methods in 2D bandit scenarios and gym tasks. We then show that DTQL could not only outperform other methods on the majority of the D4RL benchmark tasks but also demonstrate efficiency in training and inference speeds. The PyTorch implementation is available at \url{https://github.com/TianyuCodings/Diffusion_Trusted_Q_Learning}.
\end{abstract}

\section{Introduction}



Reinforcement learning (RL) centers on developing a policy to  make sequential decisions by interacting with an environment, aiming to maximize the total rewards accumulated over a trajectory
\citep{wiering2012reinforcement,li2017deep}. Offline RL addresses these challenges by enabling the training of an RL policy from fixed datasets of previously collected data, without further interactions with the environment \citep{lange2012batch,fu2020d4rl}. This approach leverages large-scale historical data, mitigating the risks and costs associated with live environment exploration. However, offline RL introduces its own set of challenges, primarily related to the distribution shift between the data on which the policy was trained and the data it encounters during evaluation \citep{fujimoto2019off}. Additionally, the limited expressive power of policies that may not adequately capture the multimodal nature of action behaviors is also a concern.

To mitigate distribution shifts, popular approaches include weighted regression, such as IQL \citep{kostrikov2021offline} and AWAC \citep{nair2020awac}, aimed at extracting viable policies from historical data. Alternatively, behavior-regularized policy optimization techniques are employed to constrain the divergence between the learned and in-sample policies during training \citep{wu2019behavior}. Notable examples of this strategy include TD3-BC \citep{fujimoto2021minimalist}, CQL \citep{kumar2020conservative}, and BEAR \citep{kumar2019stabilizing}. These methods primarily utilize either Gaussian or deterministic policies, which have faced criticism for their limited expressiveness. Recent advancements have incorporated generative models to enhance policy representation. Variational Autoencoders (VAEs) \citep{kingma2013auto} and Generative Adversarial Networks (GANs) \citep{goodfellow2020generative} have been introduced into the offline RL domain, leading to the development of algorithms such as BCQ \citep{fujimoto2019off} and GAN-Joint \citep{yang2022behavior}. Moreover, diffusion models have recently emerged as the most prevalent tools for achieving expressive policy frameworks \citep{janner2022planning,wang2022diffusion,chen2023score,hansen2023idql,chen2022offline}, demonstrating state-of-the-art performance on the D4RL benchmarks. Diffusion Q-Learning (DQL) \citep{wang2022diffusion} applies these policies for behavior regularization, while algorithms such as IDQL \citep{hansen2023idql} leverage diffusion-based policies for policy extraction.


However, optimizing diffusion policies for rewards in RL is computationally expensive due to the need for iteratively denoising to generate actions during both training and inference. Recently, distillation has become a popular technique for reducing the computational costs of diffusion models, $e.g.$, score distillation sampling (SDS) \citep{poole2022dreamfusion} and variational score distillation (VSD) \citep{wang2024prolificdreamer} in 3D generation, and Diff-Instruct \citep{luo2024diff}, Distribution Matching Distillation \citep{yin2023one}, and Score identity Distillation (SiD) \citep{zhou2024score} in 2D. These advancements distill the iterative denoising process of diffusion models into a one-step generator. SRPO \citep{chen2023score} employs SDS \citep{poole2022dreamfusion} in the offline RL field by incorporating a Kullback--Leibler (KL) divergence-based behavior regularization loss to reduce training and inference costs. Another related work, IDQL~\citep{hansen2023idql}, selects action candidates from a diffusion behavior-cloning policy and requires a 5-step iterative denoising process to generate multiple candidate actions (ranging from 32 to 128) during inference, which remains computationally demanding. 
Unlike previous approaches, our paper introduces a diffusion trust region loss that moves away from focusing on distribution matching; instead, it emphasizes establishing a safe, in-sample behavior region.
We then simultaneously train dual policies: a diffusion policy for pure behavior cloning and a one-step policy for actual deployment. The one-step policy is optimized based on two objectives: the diffusion trust region loss, which ensures safe policy exploration, and the maximization of the Q-value function, guiding the policy to generate actions in high-reward regions. We elucidate the differences between our diffusion trust region loss and KL-based behavior distillation in \Cref{sec:mode_seek} empirically and theoretically. Our method consistently outperforms KL-based behavior distillation approaches. We provide more discussions on related work in 
Appendix \ref{appendix:relate_work}.


In summary, we propose DTQL with a diffusion trust region loss. DTQL achieves new state-of-the-art results in majority of D4RL \citep{fu2020d4rl} benchmark  tasks and demonstrates significant improvements in training and inference time efficiency over DQL \citep{wang2022diffusion} and related diffusion-based methods.

\section{Diffusion Trusted Q-Learning}

Below, we first introduce the preliminaries of offline RL and basics of diffusion policies for our modeling. We then propose a new diffusion trust region loss which inherently avoids exploring out-of-distribution actions and hence enables safe and free policy exploration. Finally, we introduce our algorithm Diffusion Trusted Q-Learning (DTQL), which is efficient and well-performed. 

\subsection{Preliminaries}

In RL, the environment is typically defined within the context of a Markov Decision Process (MDP). An MDP is characterized by the tuple $M = \{S, \mathcal{A}, p_0(\bm{s}), p(\bm{s}'|\bm{s}, \bm{a}), r(\bm{s}, \bm{a}), \gamma\}$, where $S$ denotes the state space, $\mathcal{A}$ represents the action space, $p_0(\bm{s})$ is the initial state distribution, $p(\bm{s}'|\bm{s}, \bm{a})$ is the transition kernel, $r(\bm{s}, \bm{a})$ is the reward function, and $\gamma$ is the discount factor. The objective is to learn a policy $\pi_\theta(\bm{a}|\bm{s})$, parameterized by $\theta$, that maximizes the cumulative discounted reward $\mathbb{E}\left[\sum_{t=0}^{\infty} \gamma^t r(\bm{s}_t, \bm{a}_t)\right]$. In the offline setting, instead of interacting with the environment, the agent relies solely on a static dataset $\mathcal{D} = \{\bm{s}, \bm{a}, r, \bm{s}'\}$ collected by a behavior policy $\mu_\phi(\bm{a}|\bm{s})$. This dataset is the only source of information for the agents.

\subsection{Diffusion Policy}

Diffusion models are powerful generative tools that operate by defining a forward diffusion process to gradually perturb a data distribution into a noise distribution. This model is then employed to reverse the diffusion process, generating data samples from pure noise. While training diffusion models is computationally inexpensive, inference is often costly due to the need for iterative refinement-based sequential denoising. In this paper, we only train a diffusion model and avoid using it for inference, thus significantly reducing both training and inference times.

The forward process involves initially sampling $\bm{x}_0$ from an unknown data distribution $p(\bm{x}_0)$, followed by the addition of Gaussian noise to $\bm{x}_0$, denoted by $\bm{x}_t$. The transition kernel $q_t(\bm{x}_t|\bm{x}_0)$ is given by $\bm{x}_t = \alpha_t \bm{x}_0 + \sigma_t \bm{\varepsilon}$, where $\alpha_t$ and $\sigma_t$ are predefined, and $\bm{\varepsilon}$ represents random Gaussian noise.

The objective function of the diffusion model aims to train a predictor for denoising noisy samples back to clean samples, represented by the optimization problem:
\begin{align}
    \min_{\phi}\E_{t,\bm{x}_0,\bm{\varepsilon}\sim \mathcal{N}(0,\bm{I})}[w(t)\|\mu_{\phi}(\bm{x}_t,t)-\bm{x}_0\|_2^2]
\end{align}
where $w(t)$ is a weighted function dependent only on $t$. In offline RL, since our training data is state-action pairs, we train a diffusion policy using a conditional diffusion model as follows:
\begin{align} \label{eq:diffusion_policy}
    \mathcal{L}(\phi)=\E_{t,\bm{\varepsilon}\sim \mathcal{N}(0,\bm{I}),(\bm{a}_0,\bm{s})\sim\mathcal{D}}[w(t)\|\mu_{\phi}(\bm{a}_t,t|\bm{s})-\bm{a}_0\|_2^2]
\end{align}
where $\bm{a}_0,\bm{s}$ are the action and state samples from offline datasets $\mathcal{D}$, and $\bm{a}_t = \alpha_t \bm{a}_0 + \sigma_t \bm{\varepsilon}$. Following previous work \citep{chen2023score,hansen2023idql,wang2022diffusion}, $\mu(\bm{a}_t,t|\bm{s})$ can be considered an effective behavior-cloning policy.

\paragraph{The ELBO Objective} 
The diffusion denoising loss is intrinsically connected with the evidence lower bound (ELBO). It has been demonstrated in prior studies  \citep{ho2020denoising,song2021maximum, kingma2021variational, kingma2024understanding} that the ELBO for continuous-time diffusion models can be simplified to the following expression (adopted in our setting):
\begin{align}
    \log p(\bm{a}_0|\bm{s}) \geq \text{ELBO}(\bm{a}_0|\bm{s}) = -\frac{1}{2}\E_{t\sim \mathcal{U}(0,1),\bm{\varepsilon}\sim \mathcal{N}(0,\bm{I})}\left[w(t)\|\mu_{\phi}(\bm{a}_t,t|\bm{s})-\bm{a}_0\|_2^2\right]+c, \label{eq:elbo}
\end{align}
where $\bm{a}_t = \alpha_t \bm{a}_0 + \sigma_t \bm{\varepsilon}$, $w(t)=-\frac{\text{dSNR}(t)}{\text{d}t}$, and the signal-to-noise ratio $\text{SNR}(t) = \frac{\alpha_t^2}{\sigma_t^2}$, $c$ is a constant not relevant to $\phi$. Since we always assume that the $\text{SNR}(t)$ is strictly monotonically decreasing in $t$, thus $w(t)>0$. The validity of the ELBO is maintained regardless of the schedule of $\alpha_t$ and $\sigma_t$.


\citet{kingma2024understanding} generalized this theorem stating that if the weighting function $w(t) = -v(t)\frac{\text{dSNR}(t)}{\text{d}t}$, where $v(t)$ is monotonic increasing function of $t$, then this weighted diffusion denoising loss is equivalent to the ELBO as defined in Equation \ref{eq:elbo}. The details of how to train the diffusion policy, including the weight and noise schedules, will be discussed in Section \ref{sec:nn}.


\subsection{Diffusion Trust Region Loss} \label{sec:boundary_loss}


We found that optimizing diffusion denoising loss from the data perspective with a fixed diffusion model can intrinsically disencourage out-of-distribution sampling and lead to mode seeking. 
For any given $\vs$ and a fixed diffusion model $\mu_\phi$,
the loss is to find the optimal generation function $\pi_\theta(\cdot|\bm s)$ that can minimize the diffusion-based trust region (TR) loss:
\begin{align}
\label{eq:ms_loss}
    \mathcal{L}_{\TR}(\theta) = \E_{t,\bm{\varepsilon}\sim \mathcal{N}(0,\bm{I}),\bm{s} \sim \mathcal{D}, \bm{a}_{\theta} \sim \pi_{\theta}(\cdot|\bm{s})}[w(t)\|\mu_{\phi}(\alpha_t\bm{a}_{\theta} + \sigma_t \bm{\varepsilon}, t|\bm{s}) - \bm{a}_{\theta}\|_2^2],
\end{align}
where $\pi_{\theta}(\bm{a}|\bm{s})$ is a one-step generation policy, such as a Gaussian policy.

\begin{theorem}
\label{thm:mode}
    If policy $\mu_\phi$ satisfies the ELBO condition of Equation \ref{eq:elbo}, then the Diffusion Trust Region Loss aims to maximize the lower bound of the distribution mode $
    \underset{\bm{a}_0}{\max} \log p(\bm{a}_0|\bm{s})
    $ for any given $\bm s$.
\end{theorem}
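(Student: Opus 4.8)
The plan is to observe that, at a fixed state, the trust region loss evaluated on a deterministic action is exactly the negative ELBO of that action (up to the additive constant $c$ of \Cref{eq:elbo}), so that minimizing $\mathcal{L}_{\TR}$ is nothing but maximizing an expected ELBO — hence a lower bound on $\log p(\cdot|\bm{s})$ — over the actions emitted by $\pi_\theta$. First I would fix $\bm{s}$ and a candidate $\bm{a}_0$ and set $\bm{a}_\theta = \bm{a}_0$ in \Cref{eq:ms_loss}; taking $t\sim\mathcal{U}(0,1)$ to match \Cref{eq:elbo} and noting that $\alpha_t\bm{a}_\theta + \sigma_t\bm{\varepsilon}$ coincides with the forward sample $\bm{a}_t$ appearing in the ELBO, the hypothesis that $\mu_\phi$ satisfies the ELBO condition yields
\begin{align*}
\E_{t,\bm{\varepsilon}}\!\left[w(t)\,\|\mu_\phi(\alpha_t\bm{a}_0 + \sigma_t\bm{\varepsilon},\, t\,|\,\bm{s}) - \bm{a}_0\|_2^2\right] \;=\; -2\,\text{ELBO}(\bm{a}_0|\bm{s}) + 2c .
\end{align*}
Substituting this back and taking the remaining expectation over $\bm{a}_\theta\sim\pi_\theta(\cdot|\bm{s})$ via the tower property gives the per-state identity $\mathcal{L}_{\TR}(\theta;\bm{s}) = -2\,\E_{\bm{a}_\theta\sim\pi_\theta(\cdot|\bm{s})}[\text{ELBO}(\bm{a}_\theta|\bm{s})] + 2c$, so that $\argmin_\theta \mathcal{L}_{\TR}(\theta;\bm{s}) = \argmax_\theta \E_{\bm{a}_\theta\sim\pi_\theta(\cdot|\bm{s})}[\text{ELBO}(\bm{a}_\theta|\bm{s})]$.

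Next I would read off the target of this maximization. Since $\text{ELBO}(\bm{a}|\bm{s})\le\log p(\bm{a}|\bm{s})$ pointwise, for every conditional policy
\begin{align*}
\E_{\bm{a}_\theta\sim\pi_\theta(\cdot|\bm{s})}[\text{ELBO}(\bm{a}_\theta|\bm{s})] \;\le\; \max_{\bm{a}_0}\text{ELBO}(\bm{a}_0|\bm{s}) \;\le\; \max_{\bm{a}_0}\log p(\bm{a}_0|\bm{s}),
\end{align*}
with the first inequality tight exactly when $\pi_\theta(\cdot|\bm{s})$ concentrates on $\bm{a}_0^\star(\bm{s}) = \argmax_{\bm{a}_0}\text{ELBO}(\bm{a}_0|\bm{s})$. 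Thus, for a sufficiently expressive one-step policy, minimizing $\mathcal{L}_{\TR}$ drives $\bm{a}_\theta(\bm{s})$ toward the ELBO maximizer and attains the value $-2\,\max_{\bm{a}_0}\text{ELBO}(\bm{a}_0|\bm{s}) + 2c$, which is precisely $-2$ times a lower bound on the distribution log-mode $\max_{\bm{a}_0}\log p(\bm{a}_0|\bm{s})$ plus a constant; this is the mode-seeking behavior asserted by the theorem, and averaging over $\bm{s}\sim\mathcal{D}$ recovers the statement for the full loss of \Cref{eq:ms_loss}.

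The step I expect to be the main obstacle is pinning down the two sources of looseness in the informal phrasing. First, \Cref{eq:elbo} is a genuine ELBO only for admissible weightings $w(t)$ and time laws — e.g. $w(t)=-v(t)\,\mathrm{d}\,\text{SNR}(t)/\mathrm{d}t$ with $v$ monotone increasing, per \citet{kingma2024understanding} — so one must invoke the hypothesis ``$\mu_\phi$ satisfies the ELBO condition'' to guarantee that the $w(t)$ and sampling used inside $\mathcal{L}_{\TR}$ are of this form; otherwise the first displayed identity fails. Second, ``aims to maximize'' is literally only the identity $\mathcal{L}_{\TR}(\theta;\bm{s}) = -2\,\E[\text{ELBO}] + \mathrm{const}$; extracting the mode-seeking conclusion requires either allowing an unrestricted conditional distribution or, for a restricted class such as a Gaussian $\pi_\theta$, passing to the degenerate zero-variance limit — so the claim should be understood as describing what the optimization targets rather than an exactly realized minimizer within a fixed parametric family.
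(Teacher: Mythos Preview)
Your proposal is correct and follows essentially the same argument as the paper: both establish that the per-state trust region loss equals $-2\,\E_{\bm{a}_\theta\sim\pi_\theta}[\text{ELBO}(\bm{a}_\theta|\bm{s})]$ up to a constant, then chain the pointwise bound $\text{ELBO}\le\log p$ with the expectation-versus-maximum inequality to reach $\max_{\bm{a}_0}\log p(\bm{a}_0|\bm{s})$. The only cosmetic difference is the order in which the two inequalities are applied (the paper passes through $\max_\theta\E_{\bm{a}_\theta}[\log p(\bm{a}_\theta|\bm{s})]$ as the intermediate step rather than $\max_{\bm{a}_0}\text{ELBO}(\bm{a}_0|\bm{s})$), and your added caveats about the admissible weighting and the parametric-family looseness are well taken but not addressed in the paper's own proof.
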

\begin{proof}
For any given state $\bm s$
    \begin{align*}
        \underset{\bm{a}_0}{\max} \log p(\bm{a}_0|\bm{s})
        &\ge \underset{\theta}{\max} \E_{\bm{a}_\theta\sim\pi_\theta(\cdot|\bm s)}\left[\log p(\bm{a}_\theta|\bm{s})\right]\ge\underset{\theta}{\max} \E_{\bm{a}_\theta\sim\pi_\theta(\cdot|\bm s)}\left[\text{ELBO}(\bm{a}_\theta|\bm{s})\right]\\
        &=\min_{{\theta}}\frac{1}{2}\E_{t\sim \mathcal{U}(0,1),\bm{\varepsilon}\sim \mathcal{N}(0,\bm{I}), \bm{a}_{\theta} \sim \pi_{\theta}(\cdot|\bm{s})}\left[w(t)\|\mu_{\phi}(\alpha_t\bm{a}_{\theta} + \sigma_t \bm{\varepsilon},t|\bm{s})-\bm{a}_\theta\|_2^2\right] + c
    \end{align*}
    Then, during training, we consider all states $\bm{s}$ in $\mathcal{D}$. Thus, by taking the expectation over $\bm{s} \sim \mathcal{D}$ on both sides and setting $t \sim \mathcal{U}(0,1)$, we derive the loss described in Equation \ref{eq:ms_loss}.
\end{proof}

By definition of the mode of a probability distribution, we know minimizing the loss given by Equation \ref{eq:ms_loss} aims to maximize the lower bound of the mode of a probability. 
Unlike other diffusion models that generate various modalities by optimizing $\phi$ to learn the data distribution, our method specifically aims to generate actions (data) that reside in the high-density region of the data manifold specified by $\mu_\phi$ through optimizing $\theta$. Thus, the loss effectively creates a trust region defined by the diffusion-based behavior-cloning policy, within which the one-step policy $\pi_\theta$ can move freely. If the generated action deviates significantly from this trust region, it will be heavily penalized.

\begin{remark}
    For any given $\bm{s}$, assuming that our training set consists of a finite number of samples $\{\bm{a}_0^1, \dots, \bm{a}_0^n\}$, this implies that $p(\bm{x} |\bm{s})$ is represented by a mixture of Dirac delta distributions:
    \begin{align*}
        p(\bm{x} |\bm{s}) = \frac{1}{n}\sum_{i=1}^n\delta(\bm{x}-\bm{a}_0^i)
    \end{align*}
    This indicates that all actions $\bm{a}_0^i$ appearing in the training set have a uniform probability mass. Therefore, the generated action $\bm{a}_{\theta}$ can be any one of the actions in $\{\bm{a}_0^1, \dots, \bm{a}_0^n\}$ to minimize $\mathcal{L}_{\TR}(\theta)$ in Equation \ref{eq:ms_loss}, since all of them are modes of the data distribution.
\end{remark}

\begin{remark}
This loss is also closely connected with Diffusion-GAN \citep{wang2022diffusiongan} and EB-GAN  \citep{zhao2016energy}, where the discriminator loss is considered as:
\begin{align*}
    D(\bm a_\theta|\bm s) = \| \text{Dec}(\text{Enc}(\bm a_\theta)|\bm s) - \bm a_\theta \|_2^2
\end{align*}
In our model, the process of adding noise, $\alpha_t\bm{a}_{\theta} + \sigma_t \bm{\epsilon}$, functions as an encoder, and $\mu_\phi(\cdot|\bm{s})$ acts as a decoder. Thus, this loss can also be considered as a discriminator loss, which determines whether the generated action $\bm a_{\theta}$ resembles the training dataset.
\end{remark}

\begin{remark}
    By Theorem \ref{thm:mode}, the trust region can be defined using the conditional log-likelihood. Specifically, for a given state $\bm{s}$, the trust region for an action is defined as the set $\{\bm{a} \mid \log p(\bm{a} \mid \bm{s}) \geq \text{threshold}\}$, where the conditional log-likelihood is approximated by the diffusion loss. The threshold can be adjusted by tuning the hyperparameter $\alpha$ during the optimization of the final loss (Eq.~\ref{eq:DBQL}).
\end{remark}

This approach makes the generated action $\bm{a}_{\theta}$ appear similar to in-sample actions and penalizes those that differ, thereby effectuating behavior regularization. Thus,  a visualization of the toy examples (Figure~\ref{fig:loss_boundary}) can help better understand how this loss behaves. The generated action $\bm{a}_{\theta}$ will incur a small diffusion loss when it resembles a true in-sample action and a high diffusion loss if it deviates significantly from the true in-sample action.

\begin{figure}[ht]
    \centering
    \includegraphics[width=\textwidth]{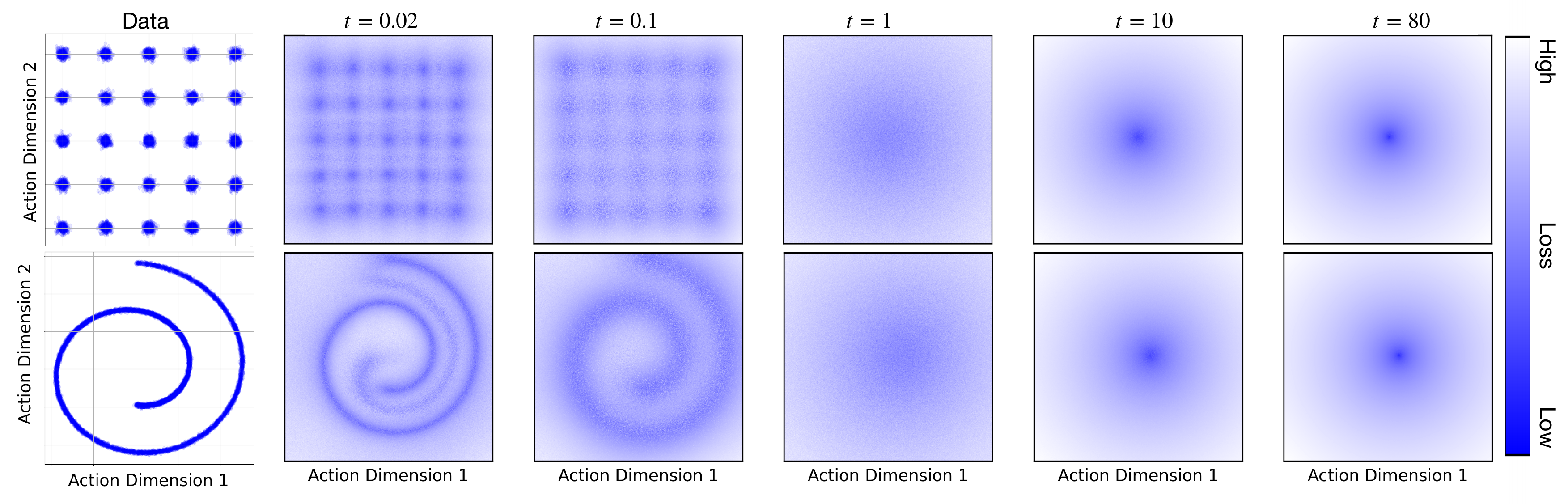}
    \caption{Diffusion trust region loss. The first column shows how the training behavior dataset looks. Columns 2-6 display the diffusion trust region loss on different actions with varying magnitudes of Gaussian noise. We can observe that the trust regions captured by the diffusion model at a given $t$ are consistent with the high-density regions of the noisy data at that specific $t$. For example, when $t$ is small, the diffusion loss is minimal where the true action lies, and high in all other locations. }
    \label{fig:loss_boundary}
\end{figure}

\subsection{Diffusion Trusted Q-Learning}

\label{sec:DBQL_alg}
We motivate our final algorithm from DQL \citep{wang2022diffusion}, which utilizes a diffusion model as an expressive policy to facilitate accurate policy regularization, ensuring that exploration remains within a safe region. Q-learning is implemented by maximizing the Q-value function at actions sampled from the diffusion policy. However, sampling actions from diffusion models can be time-consuming, and computing gradients of the Q-value function while backpropagating through all diffusion timesteps may result in a vanishing gradient problem, especially when the number of timesteps is substantial.

Building on this, we introduce a dual-policy approach, Diffusion Trusted Q-Learning (DTQL): a diffusion policy for pure behavior cloning and a one-step policy for actual depolyment. We bridge the two policies through our newly introduced diffusion trust region loss, detailed in \Cref{sec:boundary_loss}. The diffusion policy ensures that behavior cloning remains expressive, while the trust region loss enables the one-step policy to explore freely and seek modes within the region designated by the diffusion policy. The trust region loss is optimized efficiently through each diffusion timestep without requiring the inference of the diffusion policy. DTQL not only maintains an expressive exploration region but also facilitates efficient optimization. We further discuss the mode-seeking behavior of the diffusion trust region loss in \Cref{sec:mode_seek}. Next, we delve into the specifics of our algorithm. 

\paragraph{Policy Learning.} 
Diffusion inference is not required during training or evaluation in our algorithm; therefore, we utilize an unlimited number of timesteps and construct the diffusion policy $\mu_\phi$ in a continuous-time setting, based on the schedule outlined in EDM \citep{karras2022elucidating}. Further details are provided in Section \ref{sec:nn}. The diffusion policy $\mu_\phi$ can be efficiently optimized by minimizing $\gL(\phi)$ as described in \Cref{eq:diffusion_policy}. 
Furthermore, we can instantiate one typical one-step policy $\pi_\theta(\va | \bm{s})$ in two cases, Gaussian $\pi_\theta(\va | \bm{s}) = \gN(\mu_\theta(\vs), \sigma_\theta(\vs))$ or Implicit $\va_\theta = \pi_\theta(\vs, \bm\varepsilon), \bm\varepsilon \sim \gN(0, \mI)$. 
Then, we optimize $\pi_\theta$ by minimizing the introduced diffusion trust region loss and typical Q-value function maximization, as follows.
\begin{align}
\label{eq:DBQL}
    \mathcal{L}_\pi(\theta) = \alpha\cdot\mathcal{L}_{\TR}(\theta) -  \E_{\bm{s} \sim \mathcal{D}, \bm{a}_{\theta} \sim \pi_{\theta}(\bm{a}|\bm{s})}[Q_{\eta}(\bm{s}, \bm{a}_{\theta})],
\end{align}
where $\mathcal{L}_{\TR}(\theta)$ serves primarily as a behavior-regularization term, and maximizing the Q-value function enables the model to preferentially sample actions associated with higher values. Here we use the double Q-learning trick \citep{hasselt2010double} where $Q_{\eta}(\bm{s}, \bm{a}_{\theta})=\min(Q_{\eta_1}(\bm{s}, \bm{a}_{\theta}),Q_{\eta_2}(\bm{s}, \bm{a}_{\theta}))$. If a Gaussian policy is used, an additional negative log likelihood (NLL) term, $- \mathbb{E}_{\bm{s}, \bm{a} \sim \mathcal{D}}[\log \pi_\theta(\bm{a}|\bm{s})]$, should be introduced to preserve the policy’s entropy and encourage exploration during training.
This aspect is particularly crucial for diverse and sparse reward RL tasks. The empirical results of the NLL term will be discussed in \Cref{sec:ablation}.

\paragraph{Q-Learning.} We utilize Implicit Q-Learning (IQL) to train a Q function by maintaining two Q-functions $(Q_{\eta_1}, Q_{\eta_2})$ and one value function $V_\psi$, following the methodology outlined in IQL \citep{kostrikov2021offline}.

The loss function for the value function $V_\psi$ is defined as:
\begin{equation}
\label{eq:v_func}
    \gL_V(\psi) = \E_{(\vs, \va \sim \gD)} \left[L_2^{\tau}\left(\min (Q_{{\eta}_1'}(\vs, \va), Q_{{\eta}_2'}(\vs, \va)) - V_\psi(\vs)\right)\right],
\end{equation}
where $\tau$ is a quantile in $[0, 1]$, and $L_2^{\tau}(u) = |\tau - \1(u<0)|u^2$. When $\tau=0.5$, $L_2^{\tau}$ simplifies to the $L_2$ loss. When $\tau > 0.5$, $L_\psi$ encourages the learning of the $\tau$ quantile values of $Q$.

The loss function for updating the Q-functions, $Q_{\eta_i}$, is given by:
\begin{equation}
\label{eq:q_func}
    \gL_{Q}(\eta_i) = \E_{(\vs, \va, \vs' \sim \gD)} \left[|| r(\vs, \va) + \gamma * V_\psi(\vs') - Q_{\eta_i}(\vs, \va)||^2\right],
\end{equation}
where $\gamma$ denotes the discount factor. This setup aims to minimize the error between the predicted Q-values and the target values derived from the value function $V_\psi$ and the rewards. We summarize our algorithm in \Cref{alg:main}.

\begin{algorithm}[t]
\caption{Diffusion Trusted Q-Llearning}
\begin{algorithmic}
\label{alg:main}
\STATE Initialize policy network $\pi_{\theta}$, $\mu_{\phi}$, critic networks $Q_{\eta_1}$  and $Q_{\eta_2}$, and target networks $Q_{\eta_1'}$  and $Q_{\eta_2'}$, value function $V_\psi$
\FOR{each iteration}
\STATE Sample transition mini-batch $\gB = \cbr{(\vs_t, \va_t, r_t, \vs_{t+1})} \sim \gD$ .
\STATE { 1. Q-value function learning: } Update $Q_{\eta_1}$, $Q_{\eta_2}$ and $V_\psi$ by $\gL_Q$ and $\gL_V$ (Eqs. \ref{eq:v_func} and \ref{eq:q_func}).
\STATE { 2. Diffusion Policy learning: } Update $\mu_{\phi}$ by $\gL(\phi)$ (Eq. \ref{eq:diffusion_policy}).
\STATE { 3. Diffusion Trust Region Policy learning: } $\va_\theta \sim \pi_\theta(\va | \vs)$, Update $\pi_\theta$ by $\gL_\pi(\theta)$ (Eq. \ref{eq:DBQL}).
\STATE { 4. Update target networks: } $\eta_i' = \rho \eta_i' + (1 - \rho) \eta_i \mbox{ for } i=\{1,2\}$.
\ENDFOR
\end{algorithmic}
\end{algorithm}

\section{Comparison of Different Mode-Seeking Behavior Regularizations} \label{sec:mode_seek}

Another approach to accelerate training and inference in diffusion-based policy learning involves utilizing distillation techniques. Methods such as SDS \citep{poole2022dreamfusion}, VSD  \citep{wang2024prolificdreamer}, Diff-Instruct  \citep{luo2024diff}, and DMD  \citep{yin2023one} illustrate this strategy. These papers share a common theme: using a trained diffusion model alongside another diffusion network to minimize the KL divergence between the two models. In our experimental setup, this strategy is employed for behavior regularization by
\begin{align}
    \mathcal{L}_{\KL}(\theta)= D_{\KL} [\pi_\theta(\cdot|\bm s)||\mu_\phi(\cdot|\bm s)]=\E_{\bm{\varepsilon}\sim\mathcal{N}(0,\bm I), \bm{s} \sim \mathcal{D}, \pi_\theta(\vs, \bm\varepsilon)}\left[\log \frac{p_{\text{fake}}(\bm{a}_{\theta}|\bm{s})}{p_{\text{real}}(\bm{a}_{\theta}|\bm{s})}\right]
\end{align}
where $\pi_\theta(\vs, \bm\varepsilon)$ is instantiated as a one-step implicit policy.

\begin{wrapfigure}{r}{0.4\textwidth}
    \centering
    \includegraphics[width=0.4\textwidth]{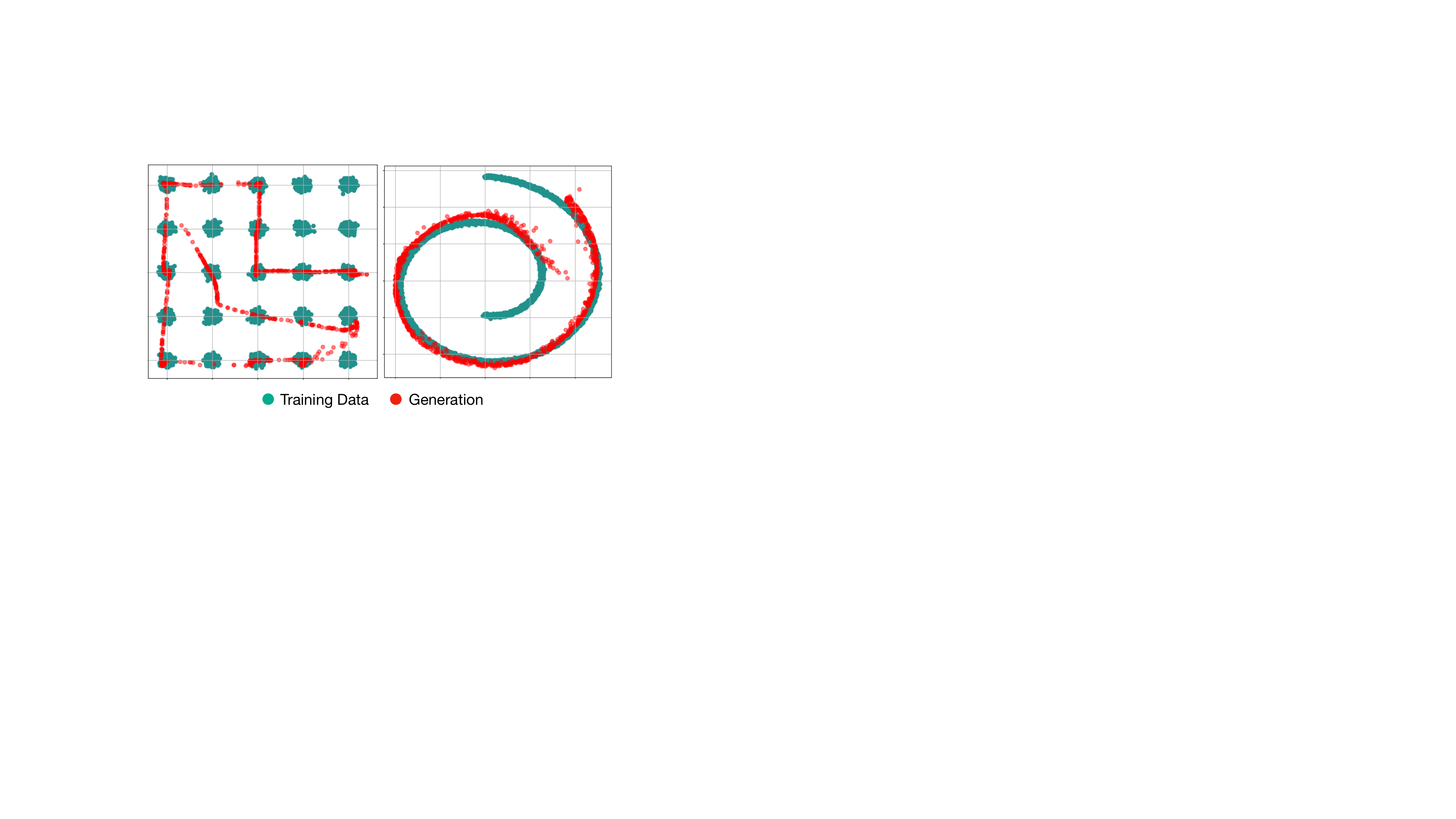}
    \caption{Green points represent the datasets we trained on. Red points are generated by $\pi_\theta$, trained using $\mathcal{L}_{\KL}$. This demonstrates that the KL loss encourages the generation process to cover multiple modalities of the dataset.}
    \label{fig:distill}
    \vspace{-14pt}
\end{wrapfigure}

As we do not have access to the log densities of the fake and true conditional distributions of actions, the loss itself cannot be calculated directly. However, we are able to compute the gradients. The gradient of $\log p_{\text{real}}(\bm{a}_{\theta}|\bm{s})$ can be estimated by the diffusion model $\mu_{\phi}(\cdot|\bm{s})$, and the gradient of $\log p_{\text{fake}}(\bm{a}_{\theta}|\bm{s})$ can also be estimated by a diffusion model trained from fake action data $\bm{a}_{\theta}$. For more details, please refer to Appendix~\ref{appendix:kl}.

KL divergence is employed in this context with the goal of capturing \textbf{multiple modalities} of the data distribution. We evaluated this loss function using a 2D toy task to gain a deeper understanding of its capability to capture multiple modalities of the dataset, as illustrated in Figure~\ref{fig:distill}.

We further investigate the differences between our trust region loss, $\mathcal{L}_{\TR}$, and the KL-based behavior distillation loss within the context of policy improvement. As illustrated in Figure \ref{fig:loss_boundary}, $\mathcal{L}_{\TR}$ ensures that the generated action $\bm{a}_\theta$ remains within the action manifold of the in-sample dataset. Coupled with the gradient of the Q-function, this allows actions to move freely within the in-sample data manifold while gravitating toward high-reward regions, which correspond to the \textbf{single modality} present in the dataset.

Conversely, $\mathcal{L}_{\KL}(\theta)$ seeks to align the distribution of $\pi_\theta(\cdot|\bm{s})$ with that of $\mu_\phi(\cdot|\bm{s})$, thereby encouraging coverage of \textbf{multiple modalities}, unlike $\mathcal{L}_{\TR}$. Covering a wide range of modalities is particularly beneficial in image generation, where diversity among generated images is essential. However, this characteristic is less advantageous in reinforcement learning (RL) contexts, where typically a single, highest-reward action is optimal for a given state. Additionally, maximizing the Q function often results in a more deterministic policy by favoring the highest-reward paths, potentially discarding alternative actions. From this perspective, $\mathcal{L}_{\TR}$ demonstrates a stronger mode-seeking capability compared to $\mathcal{L}_{\KL}$.


To visualize how these two different behavior losses work with policy improvement, we use 2D bandit scenarios. We designed a scenario shown in Figure \ref{fig:toy_main}; for additional settings, please refer to Appendix \ref{appendix:toy}. In the designed 25 Gaussian setting, all four corners have the same high reward. $\mathcal{L}_{\TR}$ encourages the policy to randomly select one high reward mode without promoting covering all of them. In contrast, $\mathcal{L}_{\KL}$ tries to cover all high-density and high-reward regions and, as a byproduct, introduces artifacts that appear as data connecting these high-density regions. This could partially be due to the smoothness constraint of neural networks. The same situation occurs in a Swiss roll dataset where the high reward region is the center of the data; $\mathcal{L}_{\TR}$ adheres closely to the high reward region, while $\mathcal{L}_{\KL}$ includes some suboptimal reward regions.

\begin{figure}[t]
\centering
\begin{subfigure}{0.45\textwidth}
\hspace*{-3cm}
    \centering
    \includegraphics[width=\linewidth]{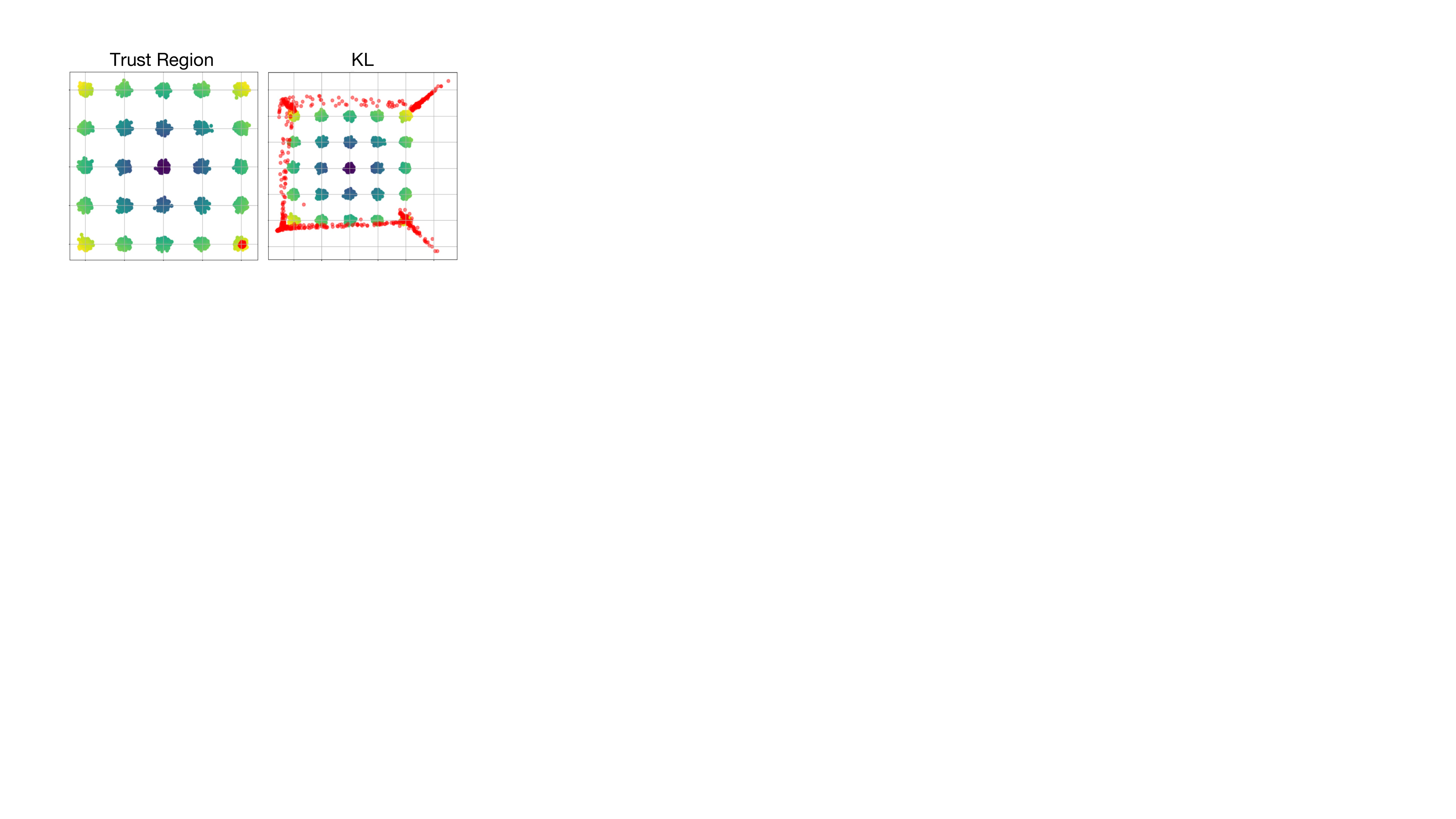}
    \hspace*{-2cm}
    \captionsetup{width=.75\linewidth, justification=centering}
    \caption{25 Guassian example with four corner have equally highest reward.}
    \label{fig:sub1}
\end{subfigure}
\begin{subfigure}{0.45\textwidth}
    \centering
    \includegraphics[width=1.17\linewidth]{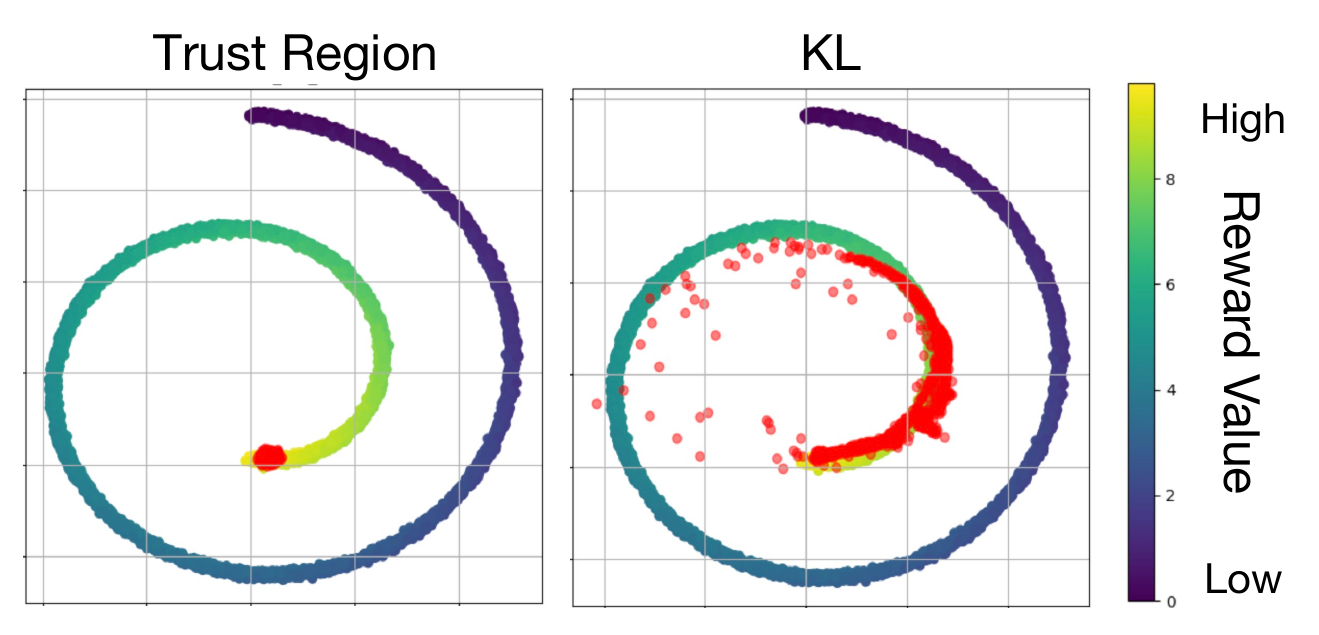}
    \captionsetup{width=.7\linewidth, justification=centering}
    \caption{Swiss roll example with center have highest reward.}
    \label{fig:sub2}
\end{subfigure}
\caption{We tested the differential impact of $\mathcal{L}_{\TR}$ and $\mathcal{L}_{\KL}$ on behavior regularization, using a trained Q-function for policy improvement. Red points represent actions generated from the one-step policy~$\pi_\theta$.}
\label{fig:toy_main}
\end{figure}

In addition to testing on 2D bandit scenarios, we also evaluated the performance of two losses on the Mujoco Gym Medium task. Consistent with our previous findings, the behavior-regularization loss $\mathcal{L}_{\TR}(\theta)$ consistently outperformed $\mathcal{L}_{\KL}(\theta)$ in terms of achieving higher rewards. The results are presented in Table \ref{table:pb_with_kl}, and the training curves are depicted in Figure \ref{fig:pb_with_kl} in Appendix \ref{appendix:kl_gym}.

\paragraph{Connection and Difference with SDS and SRPO}

SDS was first proposed in DreamFusion  \citep{poole2022dreamfusion} for 3D generation, using the gradient of the loss form (adopted in our setting):
\begin{align}
\label{eq:sds_grad}
    \nabla_\theta\mathcal{L}_{\text{SDS}}=\E_{t,\bm{s},\bm{\varepsilon}}\left[w(t)(\varepsilon_{\phi}(\bm{z}_t, t|\bm{s}) - \bm{\varepsilon})\frac{\partial \bm z_t}{\partial\theta}\right]
\end{align}
where $\bm{z}_t = \alpha_t\bm{a}_{\theta} + \sigma_t \bm{\varepsilon}$ and $\varepsilon_{\phi}$ is the noise-prediction diffusion model. This loss is utilized by SRPO  \citep{chen2023score} in offline RL. 

Considering the gradient of $\mathcal{L}_{\TR}(\theta)$ in Equation \ref{eq:ms_loss}, and acknowledging the equivalence between noise-prediction and data-prediction diffusion models with only a modification in the weight function $w(t)$, we can reformulate the loss in noise-prediction form by:
\begin{align}
    \mathcal{L}_{\TR}(\theta) &= \E_{t,\bm{s},\bm{\varepsilon}}[w'(t)\|\varepsilon_{\phi}(\bm{z}_t, t|\bm{s}) - \bm{\varepsilon}\|_2^2]\\
    \label{eq:pb_grad}
    \nabla_\theta\mathcal{L}_{\TR}(\theta)&=2\E_{t,\bm{s},\bm{\varepsilon}}\left[w'(t)(\varepsilon_{\phi}(\bm{z}_t, t|\bm{s}) - \bm{\varepsilon})\frac{\partial\varepsilon_{\phi}(\bm{z}_t, t|\bm{s})}{\partial\bm{z}_t}\frac{\partial \bm z_t}{\partial\theta}\right]
\end{align}

The primary distinction between the gradient of our method, as shown in Equation \ref{eq:pb_grad}, and that of SDS/SRPO, detailed in Equation \ref{eq:sds_grad}, lies in the inclusion of a Jacobian term, $\frac{\partial\epsilon_{\phi}(\bm{z}_t, t|\bm{s})}{\partial\bm{z}_t}$. This Jacobian term, identified as the score gradient in SiD by \citet{zhou2024score}, is notably absent from most theoretical discussions and was deliberately omitted in previous works, with DreamFusion \citep{poole2022dreamfusion} and SiD being the sole exceptions.

DreamFusion reported that the gradient depicted in Equation \ref{eq:pb_grad} fails to produce realistic 3D samples. Similarly, SiD observed its inadequacy in generating realistic images. These findings align with our Theorem \ref{thm:mode}, which demonstrates that this gradient primarily targets the mode and does not sufficiently account for diversity— an essential factor in both 3D and image generation.

In high-dimensional generative models, modes often differ significantly from typical image samples, as discussed by \citet{nalisnick2018deep}. DreamFusion observed that the gradient from Equation \ref{eq:sds_grad}, which is based on a KL loss, effectively promotes diversity. However, while diversity is crucial in image and 3D generation, it might be of lesser importance in offline RL. Consequently, SRPO's use of the SDS gradient, which is tailored for diverse generation, may result in suboptimal performance compared to our diffusion trust region loss. This assertion is supported by empirical results on the D4RL datasets, as discussed in Section \ref{sec:d4rl}.

\section{Experiments}

In this section, we evaluate our method using the popular D4RL benchmark  \citep{fu2020d4rl}. We further compare our training and inference efficiency against other baseline methods. Additionally, an ablation study on the negative log likelihood (NLL) term and one-step policy choice is presented. Details regarding the training of the diffusion model and its structural components are also discussed.
\paragraph{Hyperparameters} In D4RL benchmarks, for all Antmaze tasks, we incorporate an NLL term, while for other tasks, this term is omitted. Additionally, we adjust the parameter $\alpha$ for different tasks.
Details on hyperparameters and implementation are provided in Appendices \ref{appendix:nn} and \ref{appendix:hyper}.

\subsection{D4RL Performance}
\label{sec:d4rl}

In Table \ref{table:d4rl}, we evaluate the D4RL performance of our method against other offline algorithms. Our selected benchmarks include conventional methods such as TD3+BC  \citep{fujimoto2021minimalist} and IQL  \citep{kostrikov2021offline}, along with newer diffusion-based models like Diffusion QL (DQL)  \citep{wang2022diffusion}, IDQL  \citep{hansen2023idql}, and SRPO  \citep{chen2023score}.

\begin{table}[t]
\centering
\caption{The performance of Our methods and SOTA baselines on D4RL Gym, AntMaze, Adroit, and Kitchen tasks. Results for our methods correspond to the mean and standard errors of normalized scores over 50 random rollouts (5 independently trained models and 10 trajectories per model) for Gym tasks, which generally exhibit low variance in performance, and over 500 random rollouts (5 independently trained models and 100 trajectories per model) for the other tasks. Our method outperforms all prior methods by a
clear margin on most of domains. The normalized scores is recorded by the end of training phase. Numbers within 5 \% of the maximum in every individual task are highlighted. }
\label{table:d4rl}
\resizebox{\textwidth}{!}{
\begin{tabular}{@{}lccccccccccccccc@{}}
\toprule
\textbf{Gym} & \textbf{BC} &  \textbf{Onestep RL} & \textbf{TD3+BC} & \textbf{DT} & \textbf{CQL} & \textbf{IQL} & \textbf{DQL} & \textbf{IDQL}&\textbf{SRPO}&\textbf{Ours}\\
\midrule
halfcheetah-medium-v2 & 42.6 &  48.4 & 48.3 & 42.6 & 44.0 & 47.4 & 51.1 &51.0 & \textbf{60.4} & \textbf{57.9} $\pm$ 0.13\\
hopper-medium-v2 & 52.9   & 59.6 & 59.3 & 67.6 & 58.5 & 66.3 & 90.5  &65.4 &\textbf{95.5} &\textbf{99.6}$\pm$0.87\\
walker2d-medium-v2 & 75.6  & 81.8 & 83.7 & 74.0 & 72.5 & 78.3 & \textbf{87.0}  &82.5&84.4&\textbf{89.4}$\pm$0.13\\
halfcheetah-medium-replay-v2 & 36.3  & 38.1 & 44.6 & 36.0 & 45.2 & 44.2 & 47.8 &45.9&\textbf{51.4}&\textbf{50.9}$\pm$0.11  \\
hopper-medium-replay-v2 & 18.1   & 97.5 & 60.9 & 82.7 & 95.0 & 94.7 & 101.3 &92.1&\textbf{101.2}&\textbf{100.0}$\pm$0.13\\
walker2d-medium-replay-v2 & 26.0   & 49.5 & 81.8 & 66.6 & 77.2 & 73.9 & \textbf{95.5} &85.1&84.6 & 88.5$\pm$ 2.16\\
halfcheetah-medium-expert-v2 & 55.2   & 93.4 & 90.7 & 86.8 & 91.6 & 86.7 & \textbf{96.8} &\textbf{95.9}&\textbf{92.2} & \textbf{92.7} $\pm$ 0.2\\
hopper-medium-expert-v2 & 52.5  & 103.3 & 98.0 & 107.6 & 105.8 & 91.5 & \textbf{111.1} &\textbf{108.6} &100.1 & \textbf{109.3} $\pm$ 1.49\\
walker2d-medium-expert-v2 & 101.9  & \textbf{113.0} & \textbf{110.1} & 107.1 & \textbf{109.4} & \textbf{109.6} & \textbf{110.1}  &\textbf{112.7}&\textbf{114.0} & \textbf{110} $\pm$ 0.07\\
\midrule
\textbf{Gym Average} & 51.9  & 76.1 & 75.3 & 74.7 & 77.6 & 77.0 & \textbf{88.0} & 82.1&\textbf{87.1} & \textbf{88.7}\\
\bottomrule

\toprule
\textbf{Antmaze} & \textbf{BC} &  \textbf{Onestep RL} & \textbf{TD3+BC} & \textbf{DT} & \textbf{CQL} & \textbf{IQL} & \textbf{DQL} & \textbf{IDQL}&\textbf{SRPO}&\textbf{Ours} \\
\midrule
antmaze-umaze-v0 & 54.6   & 64.3 & 78.6 & 59.2 & 74.0 & 87.5 & \textbf{93.4} &\textbf{94.0}&\textbf{90.8}&\textbf{94.8}$\pm$1.00\\
antmaze-umaze-diverse-v0 & 45.6   & 60.7 & 71.4 & 53.0 & \textbf{84.0} & 62.2 & 66.2 &\textbf{80.2}&59.0&78.8$\pm$1.83\\
antmaze-medium-play-v0 & 0.0 & 10.6 & 0.0 & 0.0 & 61.2 & 71.2 & 76.6  &\textbf{84.5}& 73.0&79.6 $\pm$ 1.8\\
antmaze-medium-diverse-v0 & 0.0   & 3.0 & 0.2 & 0.0 & 53.7 & 70.0 & 78.6 &\textbf{84.8} &65.2 &\textbf{82.2} $\pm$ 1.71\\
antmaze-large-play-v0 & 0.0  & 0.0 & 0.0 & 0.0 & 15.8 & 39.6 & 46.4  &\textbf{63.5}&38.8 &52.0$\pm$ 2.23\\
antmaze-large-diverse-v0 & 0.0  & 0.0 & 0.0 & 0.0 & 14.9 & 47.5 & 56.6  &\textbf{67.9}&33.8 & 54.0 $\pm$ 2.23\\
\midrule
\textbf{Antmaze Average} & 16.7  & 20.9 & 27.3 & 18.7 & 50.6 & 63.0 & 69.6 &\textbf{79.1} & 30.1&73.6\\
\bottomrule

\toprule
\textbf{Adroit Tasks} & \textbf{BC} &  \textbf{BCQ} & \textbf{BEAR} & \textbf{BRAC-p} & \textbf{BRAC-v} & \textbf{REM} & \textbf{CQL} & \textbf{IQL} & \textbf{DQL} &\textbf{Ours}\\
\midrule
pen-human-v1 & 25.8  & 68.9 & -1.0 & 8.1 & 0.6 & 5.4 & 35.2 & \textbf{71.5} & \textbf{72.8}  &64.1$\pm$2.97\\
pen-cloned-v1 & 38.3  & 44.0 & 26.5 & 1.6 & -2.5 & -1.0 & 27.2 & 37.3 & 57.3 &\textbf{81.3}$\pm$ 3.04\\
\midrule
\textbf{Adroit Average} & 32.1  & 56.5 & 12.8 & 4.9 & -1.0 & 2.2 & 31.2 & 54.4 & 65.1 &\textbf{72.7}\\
\bottomrule

\toprule
\textbf{Kitchen Tasks} & \textbf{BC}  & \textbf{BCQ} & \textbf{BEAR} & \textbf{BRAC-p} & \textbf{BRAC-v} & \textbf{AWR} & \textbf{CQL} & \textbf{IQL} & \textbf{DQL} & \textbf{Ours}\\
\midrule
kitchen-complete-v0 & 33.8  & 8.1 & 0.0 & 0.0 & 0.0 & 0.0 & 43.8 & 62.5 & \textbf{84.0} &\textbf{80.8}$\pm$1.06\\
kitchen-partial-v0 & 33.8  & 18.9 & 13.1 & 0.0 & 0.0 & 15.4 & 49.8 & 46.3 & 60.5 &\textbf{74.4}$\pm$0.25\\
kitchen-mixed-v0 & 47.5  & 8.1 & 47.2 & 0.0 & 0.0 & 10.6 & 51.0 & 51.0 & \textbf{62.6} &\textbf{60.2}$\pm$0.59\\
\midrule
\textbf{Kitchen Average} & 38.4  & 11.7 & 20.1 & 0.0 & 0.0 & 8.7 & 48.2 & 53.3 & \textbf{69.0} &\textbf{71.8}\\
\bottomrule
\end{tabular}
}
\end{table}


In the D4RL datasets, our method (DTQL) outperformed all conventional and other diffusion-based offline RL methods, including DQL and SRPO, across all tasks. 
Moreover, it is 10 times more efficient in inference than DQL and IDQL; and 5 times more efficient in total training wall time compared with IDQL (see Section \ref{sec:time}).


\begin{remark}
We would like to highlight that the SRPO method \citep{chen2023score} reported results on Antmaze using the ``-v2'' version, which differs from the ``-v0'' version employed by prior methods such as DQL \citep{wang2022diffusion} and IDQL \citep{hansen2023idql}, to which it was compared. This version discrepancy, not explicitly stated in their paper, is evident upon inspection of SRPO's official codebase \footnote{Refer to line 7 at \url{https://github.com/thu-ml/SRPO/blob/main/utils.py}, commit b006412}. The variation between the -v2'' and -v0'' datasets significantly impacts algorithm performance. To ensure a fair comparison, we utilize the ``-v0'' environments consistent with established baselines. We employed the official SRPO code on Antemze-v0 and maintained identical hyperparameters used for Antmaze-v2. Additionally, we conducted experiments with our algorithm on the Antmaze-v2 environment using the same hyperparameters as in the Antmaze-v0 setup but extended the training epochs, as detailed in Table \ref{table:srpo} in Appendix \ref{appendix:experiment}.
\end{remark}

\subsection{Computational Efficiency}
\label{sec:time}


We further examine the training and inference performance relative to other diffusion-based offline RL methods. An overview of this performance, using \textit{antmaze-umaze-v0} as a benchmark, is presented in Table \ref{table:time}. Our method requires less training time per epoch than DQL and SRPO, yet more than IDQL. However, while IDQL necessitates 3000 epochs, DTQL operates efficiently with only 500 epochs, considerably reducing the overall training duration.

As depicted in Figure \ref{fig:train_stack}, the extended training time per epoch for our method results from the requirement to train an additional one-step policy, a step not needed by IDQL. Although SRPO also incorporates a one-step policy, our method achieves greater efficiency in training the diffusion policy. Unlike SRPO, which requires several ResNet blocks for effective performance, our approach utilizes only a 4-layer MLP, further curtailing the training time. Additional details on total training wall time are provided in Appendix \ref{appendix:wall_time}.


\begin{wrapfigure}{r}{0.49\textwidth}
    \centering
    \includegraphics[width=0.35\textwidth]{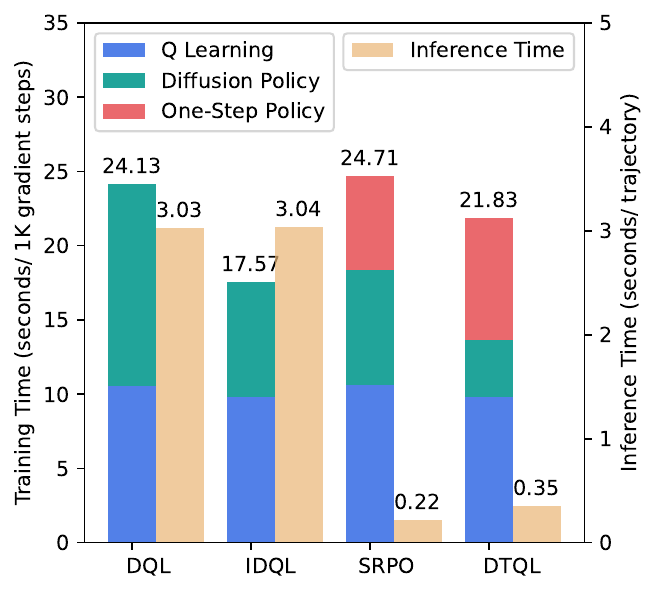}
    \caption{Training time required for different algorithms in D4RL \textit{antmaze-umaze-v0} tasks. All experiments are conducted with the same PyTorch backend and the same computing hardware setup.}
    \label{fig:train_stack}
\end{wrapfigure}

For inference time, our method performs comparably to SRPO, as both utilize a one-step policy. The slightly higher inference time results from our use of a stochastic policy, which requires resampling after each forward pass of the neural network. Additionally, we employ the stochastic max Q trick, similar to that used in DQL \cite{wang2022diffusion}. However, our method achieves a tenfold increase in inference speed over DQL and IDQL, which require 5-step iterative denoising to generate actions. All experiments were performed on a server equipped with eight RTX-A5000 GPUs, each with 24GB of memory.
\begin{table}[t]
\caption{Training and Inference time required for different algorithms in D4RL \textit{antmaze-umaze-v0} tasks. Every single experiment is conducted with the same PyTorch backend and run on a single RTX-A5000 GPU.}
\label{table:time}
\centering
\begin{tabular}{@{}lcccc@{}}
\toprule
\textbf{antmaze-umaze-v0} & \textbf{DQL} &\textbf{IDQL} &\textbf{SRPO} &\textbf{Ours} \\
\midrule
Training time (s per 1k steps) & 24.13 & 17.57 & 24.71 & 21.83 \\
Inference time (s per trajectory) & 3.03 & 3.04 & 0.22 & 0.35 \\
Training epochs  & 1000 & 3000 & 1000 & 500 \\
Total training time (hours) & 6.70 & 14.64 & 9.42 & 3.33 \\
\bottomrule
\end{tabular}
\end{table}
\begin{remark}
    For total training time, SRPO trains 1000 epochs for the one-step policy while training 1500 epochs for the diffusion policy and Q function. DTQL requires 50 epochs of pretraining. Implement details are in Appendix \ref{appendix:nn}.
\end{remark}

\subsection{Diffusion Training Schedule }
\label{sec:nn}
For training the diffusion policy as described in Equation~\ref{eq:diffusion_policy} and the diffusion trust region loss in Equation~\ref{eq:ms_loss}, we utilize the diffusion weight and noise schedule outlined in EDM  \citep{karras2022elucidating}. Although EDM does not satisfy the ELBO condition stipulated in Equation \ref{eq:elbo}—a fact established in \citet{kingma2024understanding}—we adopted it due to its demonstrated enhancements in perceptual generation quality, as evidenced by metrics such as the Fréchet Inception Distance (FID) and Inception Score in the field of image generation. \citet{kingma2024understanding} also attempted to modify the EDM weight schedule to be monotonically increasing, but this did not lead to better FID for image generation. Thus, we retain EDM as our continuous training schedule. For completeness, the details of the EDM schedule are discussed in Appendix \ref{appendix:diff_schedule}.

\subsection{Ablation Studies}
\label{sec:ablation}

\paragraph{One-step Policy Choice} We chose to use a Gaussian policy for all our experiments instead of an implicit or deterministic policy because the Gaussian policy is flexible and provides a convenient way to control entropy when needed. When there is no need to maintain entropy, the Gaussian policy quickly degenerates to a deterministic policy, where the variance approaches zero, as indicated in Figures \ref{fig:umaze_entropy} and \ref{fig:large_entropy}.

\paragraph{Negative Log Likelihood Term}
As mentioned in Section \ref{sec:DBQL_alg}, we incorporate an NLL term $-\E_{\bm{s}, \bm{a} \sim \mathcal{D}}[\log \pi(\bm{a}|\bm{s})]$ into the loss function in Equation \ref{eq:DBQL} to maintain exploration and policy entropy during training when using a Gaussian policy. We conducted an ablation study to assess its impact on the final rewards and the  of the Gaussian policy, taking \textit{antmaze-umaze-v0} and \textit{antmaze-large-diverse-v0} as examples. As observed in Figure \ref{fig:entropy}, for the less complex task \textit{antmaze-umaze-v0}, adding the NLL term does not significantly enhance the final score but does stabilize the training process (see Figure \ref{fig:umaze_reward}). However, for more complex tasks like \textit{antmaze-large-diverse-v0}, the addition of the NLL term markedly increases the final score. We attribute this improvement to the ability of the NLL term to maintain high entropy during training, thus preserving exploration capabilities, as shown in Figures \ref{fig:umaze_entropy} and \ref{fig:large_entropy}. 
\begin{figure}[t]
\centering
\begin{subfigure}{0.24\textwidth}
    \centering
    \includegraphics[width=\linewidth]{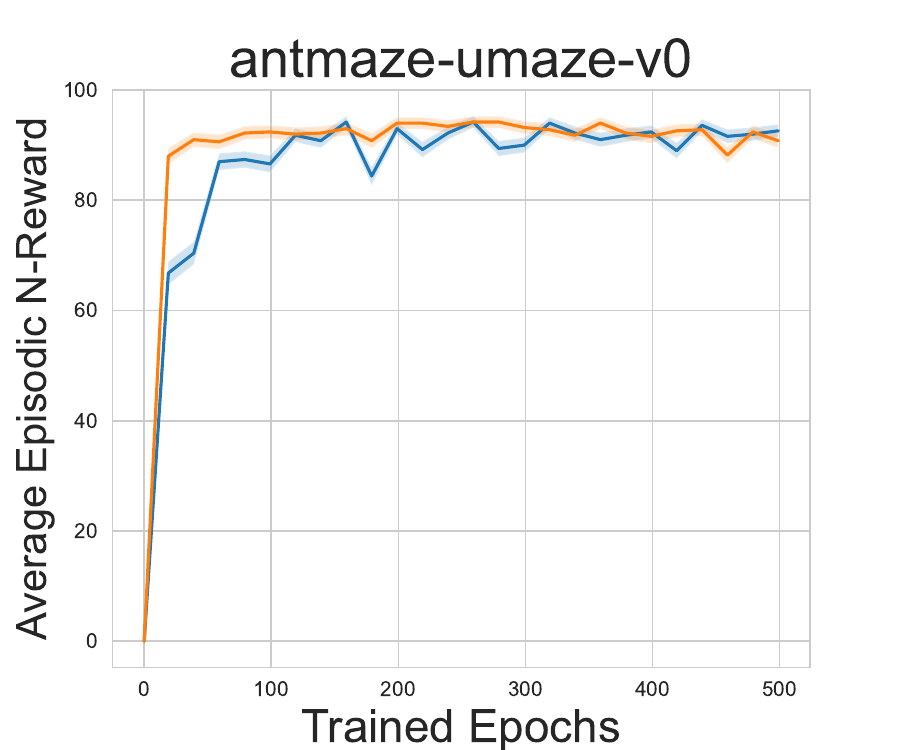}
    \caption{Reward Curve}
    \label{fig:umaze_reward}
\end{subfigure}
\begin{subfigure}{0.24\textwidth}
    \centering
    \includegraphics[width=\linewidth]{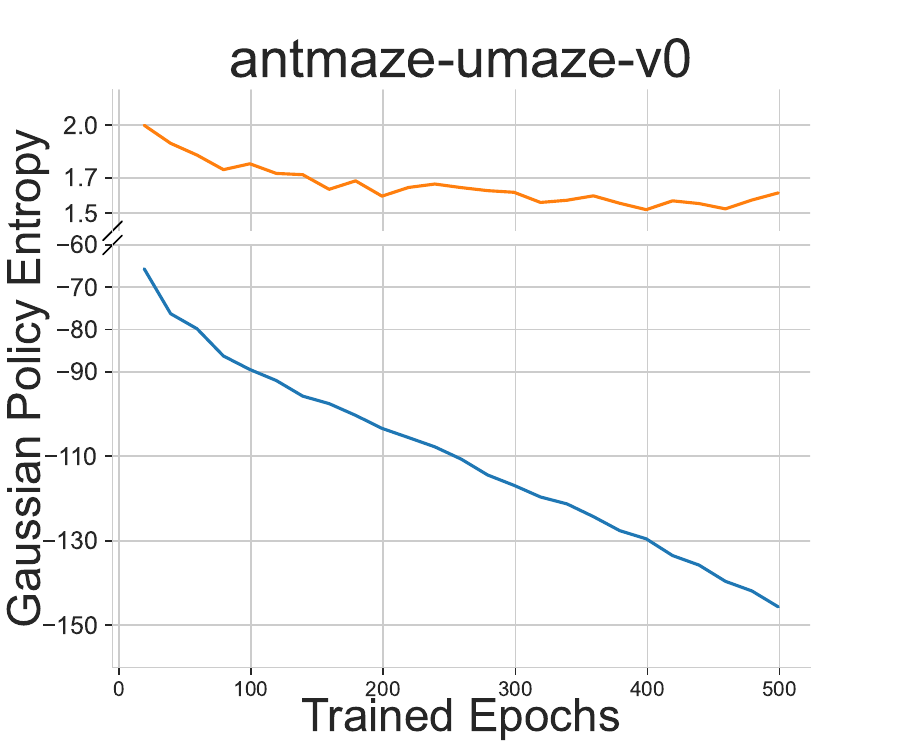}
    \caption{Entropy Curve}
    \label{fig:umaze_entropy}
\end{subfigure}
\begin{subfigure}{0.24\textwidth}
    \centering
    \includegraphics[width=\linewidth]{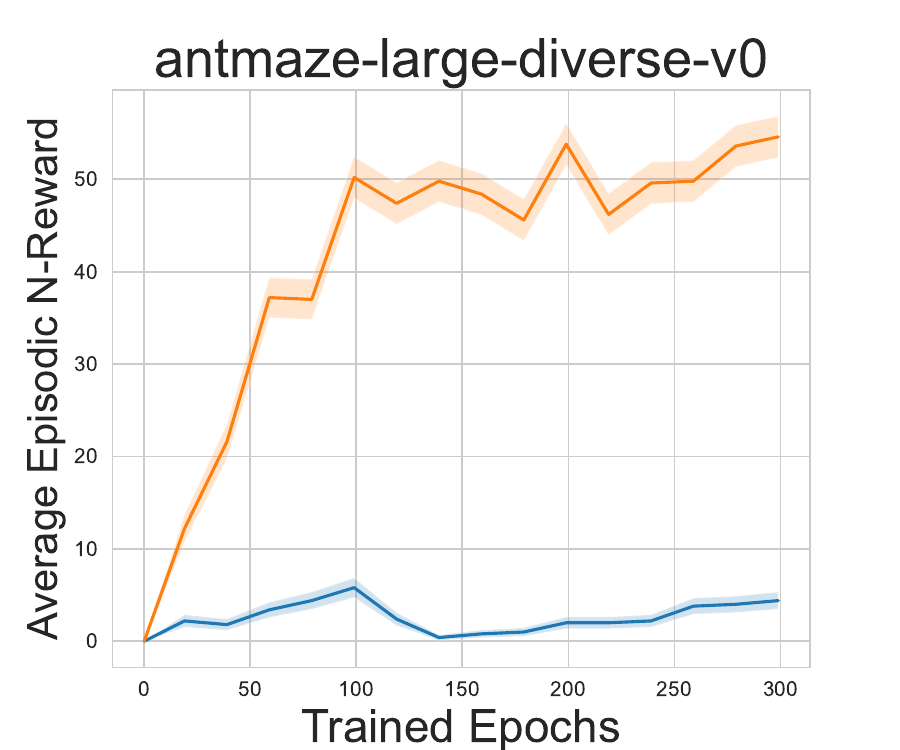}
    \caption{Reward Curve}
    \label{fig:large_reward}
\end{subfigure}
\begin{subfigure}{0.24\textwidth}
    \centering
    \includegraphics[width=\linewidth]{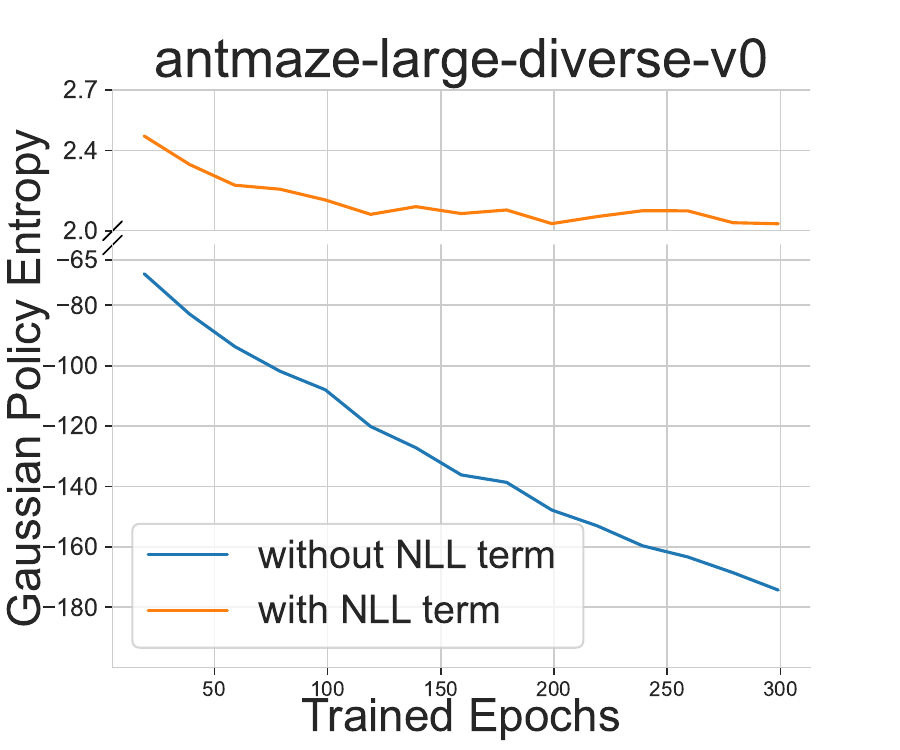}
    \caption{Entropy Curve}
    \label{fig:large_entropy}
\end{subfigure}
    \caption{Rewards and Gaussian policy entropy during training are recorded and illustrated in the figures. The blue line represents training without the addition of an NLL term, while the orange line indicates training with the NLL term included.}
\label{fig:entropy}
\end{figure}

\section{Conclusion and Limitation}
\label{sec:conclusion}
In this work, we present DTQL, which comprises a diffusion policy for pure behavior cloning and a practical one-step policy. The diffusion policy maintains expressiveness, while the diffusion trust region loss introduced in this paper directs the one-step policy to explore freely and seek modes within the safe region defined by the diffusion policy. This training pipeline eliminates the need for iterative denoising sampling during both training and inference, making it remarkably computationally efficient. Moreover, DTQL achieves state-of-the-art performance across the majority of tasks in the D4RL benchmark. Some limitations of DTQL include the potential for improvement in its benchmark performance. Additionally, some design aspects of the one-step policy could benefit from further investigation. Currently, our experiments are primarily conducted in an offline setting. It would be interesting to explore how this method can be extended to an online setting or adapted to handle more complex inputs, such as images. Additionally, rather than focusing solely on point estimation of rewards, it would be beneficial to estimate the distribution of rewards, as recommended by previous studies in distributional reinforcement learning \citep{bellemare2017distributional,barth2018distributed,yue2020implicit}.


\section*{Acknowledgments}
The authors acknowledge the support of NSF-IIS 2212418 and NIH-R37 CA271186.

\bibliography{reference}
\bibliographystyle{plainnat}

\newpage
\appendix
\apptitle{Supplementary Material: Diffusion Policies Creating a Trust Region for
Offline Reinforcement Learning}


\section{Related Work}
\label{appendix:relate_work}
\paragraph{Expressive Generative Models for Behavior Cloning}
Behavior cloning refers to the task of learning the behavior policy that was used to collect static datasets. Generative models are often employed for behavior cloning due to their expressive power. For instance, EMaQ \citep{ghasemipour2021emaq} uses an auto-regressive model for behavior cloning. BCQ \citep{fujimoto2019off} utilizes a Conditional Variational Autoencoder (VAE), while \citet{florence2022implicit} employ energy-based models. GAN-Joint \citep{yang2022behavior} leverages GANs, and several studies \citep{wang2022diffusion,janner2022planning,pearce2023imitating} utilize diffusion models for behavior cloning. Diffusion models have demonstrated strong performance due to their ability to capture multimodal distributions. However, they may suffer from increased training and inference times because of the iterative denoising process required for sampling.

\paragraph{Efficiency Improvement in Diffusion-Based RL Methods.}
Several studies aim to accelerate the training of diffusion models in offline RL settings. One approach involves using specialized diffusion ODE solvers, such as the DDIM solver \citep{song2020denoising} or the DPM-solver \citep{lu2022dpm}, to speed up iterative sampling. Another strategy is to avoid iterative denoising during training or inference. EDP \citep{kang2024efficient} and IDQL \citep{hansen2023idql} both focus on avoiding iterative sampling during training. EDP adopts an approximate diffusion sampling scheme to minimize the required sampling steps, although it still requires iterative denoising during inference. IDQL accelerates the training process by only training a behavior cloning policy without denoising sampling. However, it requires iterative sampling during inference by selecting from a batch of candidate generated actions. SRPO \citep{chen2023score} employs score distillation methods to avoid iterative denoising in both training and inference.

\paragraph{Distillation Methods.}
Distillation methods for diffusion models have been proposed to enable one-step generation of images or 3D objects. Examples of such methods include SDS \citep{poole2022dreamfusion}, VSD \citep{wang2024prolificdreamer}, Diff Instruct \citep{luo2024diff}, and DMD \citep{yin2023one}. The core idea of these methods is to minimize the KL divergence between a pre-trained diffusion model and a target one-step generation model. SiD \citep{zhou2024score} uses a different divergence metric but shares the same goal of mimicking the distribution learned by a pre-trained diffusion model. The distillation strategy can also be applied in the offline RL field to accelerate training and inference. However, directly adopting these methods may result in suboptimal performance.

\section{Diffusion Schedule}
\label{appendix:diff_schedule}

This diffusion training schedule is the same for training the behavior-cloning policy in \Eqref{eq:diffusion_policy} and the diffusion trust region loss in \Eqref{eq:ms_loss}.

\paragraph{Noise Schedule}
We illustrate the EDM diffusion training schedule in our setting. First, we need to define some prespecified parameters: $\sigma_{\text{data}} = 0.5$, $\sigma_{\text{min}} = 0.002$, $\sigma_{\text{max}} = 80$. The noise schedule is defined by $\bm{a}_t = \alpha_t \bm{a} + \sigma_t \bm{\varepsilon}$, where $\bm{\varepsilon} \sim \mathcal{N}(0, \bm{I})$. We set $\alpha_t = 1$ and $\sigma_t = t$. The variable $\log(t)$ follows a logistic distribution with location parameter $\log \sigma_{\text{data}}$ and scale parameter $0.5$. The original EDM paper samples $\log(t)$ from $\mathcal{N}(-1.2, 1.2^2)$, but this difference does not significantly affect our~algorithm.

\paragraph{Denoiser}
The denoiser $\mu_\phi$ is defined as:
$$
\mu_\phi(\bm{a}_t, t | \bm{s}) = c_{\text{skip}}(\sigma) \bm{a}_t + c_{\text{out}}(\sigma) F_\phi(c_{\text{in}}(\sigma) \bm{a}_t, c_{\text{noise}}(\sigma) | \bm{s}),
$$
where $\sigma = \sigma_t = t$ and $F_\phi$ represents the raw neural network layer. We also define:
\begin{align*}
    c_{\text{skip}}(\sigma) &= \frac{\sigma_{\text{data}}^2}{\sigma^2 + \sigma_{\text{data}}^2}, ~~
    c_{\text{out}}(\sigma) = \frac{\sigma \cdot \sigma_{\text{data}}}{\sqrt{\sigma^2 + \sigma_{\text{data}}^2}}, \\
    c_{\text{in}}(\sigma) &= \frac{1}{\sigma^2 + \sigma_{\text{data}}^2}, ~~
    c_{\text{noise}}(\sigma) = \frac{1}{4} \log(\sigma).
\end{align*}

\paragraph{Weight Schedule}
The final loss is given by:
\begin{align*}
    \mathbb{E}_{\sigma, \bm{a}, \bm{s}, \bm{\varepsilon}} \left[\lambda(\sigma) c_{\text{out}}^2(\sigma) \left\| F_\phi(c_{\text{in}}(\sigma) \cdot (\bm{a} + \bm{\varepsilon}), c_{\text{noise}}(\sigma) | \bm{s}) - \frac{1}{c_{\text{out}}(\sigma)} \left( \bm{a} - c_{\text{skip}}(\sigma) \cdot (\bm{a} + \bm{\varepsilon}) \right) \right\|_2^2 \right],
\end{align*}
where $\lambda(\sigma) = \frac{1}{c_{\text{out}}^2(\sigma)}$.

\section{Details in KL Behavior Regularization}
\label{appendix:kl}

Here we introduce how we implement KL divergence regularization. The idea is similar to previous KL-based distillation methods \citep{wang2024prolificdreamer,luo2024diff,yin2023one}, but adapted to our setting. Our loss function is defined as:

\begin{align}
    \mathcal{L}_{\KL}(\theta) = D_{\KL} [\pi_\theta(\cdot|\bm{s}) || \mu_\phi(\cdot|\bm{s})] = \mathbb{E}_{\bm{\varepsilon} \sim \mathcal{N}(0, \bm{I}), \bm{s} \sim \mathcal{D}, \pi_\theta(\bm{s}, \bm{\varepsilon})} \left[ \log \frac{p_{\text{fake}}(\bm{a}_\theta|\bm{s})}{p_{\text{real}}(\bm{a}_\theta|\bm{s})} \right]
\end{align}

The gradient of $\mathcal{L}_{\KL}(\theta)$ is given by:
\begin{align*}
    \nabla_\theta \mathcal{L}_{\KL}(\theta) = \mathbb{E}_{\bm{\varepsilon}, \bm{s}, \bm{a}_\theta = \pi_\theta(\bm{s}, \bm{\varepsilon})} \left[ \left( s_{\text{fake}}(\bm{a}_\theta | \bm{s}) - s_{\text{real}}(\bm{a}_\theta | \bm{s}) \right) \nabla_\theta \pi_\theta \right]
\end{align*}
where $s_{\text{real}}(\bm{a}_\theta | \bm{s}) = \nabla_{\bm{a}_\theta} \log p_{\text{real}}(\bm{a}_\theta | \bm{s})$ and $s_{\text{fake}}(\bm{a}_\theta | \bm{s}) = \nabla_{\bm{a}_theta} \log p_{\text{fake}}(\bm{a}_\theta | \bm{s})$. By using the Score-ODE given in \citep{song2020score}, we can estimate $s_{\text{real}}(\bm{a}_\theta | \bm{s})$ and $s_{\text{fake}}(\bm{a}_\theta | \bm{s})$ with a diffusion model. Let $\bm{a}_{\theta, t} = \alpha_t \bm{a}_\theta + \sigma_t \bm{\varepsilon}$, the real score can be estimated by:

\begin{align*}
    s_{\text{real}}(\bm{a}_{\theta, t}, t | \bm{s}) = -\frac{\bm{a}_{\theta, t} - \alpha_t \mu_\phi(\bm{a}_{\theta, t}, t | \bm{s})}{\sigma_t^2}
\end{align*}
where $\mu_\phi$ is the pre-trained diffusion behavior cloning model that learns the true data distribution.

Similarly, we can estimate the fake score by:
\begin{align*}
    s_{\text{fake}}(\bm{a}_{\theta, t}, t | \bm{s}) = -\frac{\bm{a}_{\theta, t} - \alpha_t \mu_\xi(\bm{a}_{\theta, t}, t | \bm{s})}{\sigma_t^2}
\end{align*}
where $\mu_\xi$ is trained using fake data:
\begin{align*}
    \mathcal{L}(\xi) = \|\mu_\xi(\bm{a}_{\theta, t}, t | \bm{s}) - \bm{a}_\theta\|_2^2
\end{align*}
which is trained with generated fake action data.

Thus, the gradient of $\mathcal{L}_{\KL}(\theta)$ can be expressed as:
\begin{align*}
    \nabla_\theta \mathcal{L}_{\KL}(\theta) = \mathbb{E}_{\bm{\varepsilon}, \bm{s}, \bm{a}_\theta, \bm{a}_{\theta, t}} \left[ w_t \alpha_t \left( s_{\text{fake}}(\bm{a}_{\theta, t}, t | \bm{s}) - s_{\text{real}}(\bm{a}_{\theta, t}, t | \bm{s}) \right) \nabla_\theta \pi_\theta \right]
\end{align*}
where $w_t = \frac{\sigma_t^2}{\alpha_t} \frac{A}{\|\mu_\phi(\bm{a}_{\theta, t}, t) - \bm{a}_\theta\|_1}$ and $A$ is the dimension of the action space.

The algorithm for KL regularization is shown below:
\begin{algorithm}[t]
\caption{KL Regularization}
\label{alg:KL}
\begin{algorithmic}
\STATE Initialize policy network $\pi_{\theta}$, $\mu_{\phi}$, $\mu_{\xi}$
\FOR{each iteration}
    \STATE Sample transition mini-batch $\mathcal{B} = \{(\bm{s}_t, \bm{a}_t, r_t, \bm{s}_{t+1})\} \sim \mathcal{D}$
    \STATE {Diffusion Policy Learning:} Update $\mu_{\phi}$ by $\mathcal{L}(\phi)$
\ENDFOR
\STATE Initialize policy and fake score network: $\theta \leftarrow \phi$, $\xi \leftarrow \phi$
\FOR{each iteration}
    \STATE Sample transition mini-batch $\mathcal{B} = \{(\bm{s}_t, \bm{a}_t, r_t, \bm{s}_{t+1})\} \sim \mathcal{D}$, generate $\bm a_\theta$
    \STATE {Random timestep and add noise:} Choose $t$, $\bm{a}_{\theta_t} = \alpha_t \bm{a}_\theta + \sigma_t \varepsilon$
    \STATE {with\_no\_grad():}
    \STATE \ \ \ \ $pred\_fake\_action = \mu_\xi(\bm{a}_{\theta_t}, t | \bm{s})$
    \STATE \ \ \ \ $pred\_real\_action = \mu_\phi(\bm{a}_{\theta_t}, t | \bm{s})$
    \STATE $weighting\_factor = \text{abs}(\bm{a}_\theta - pred\_real\_action). \text{mean}(\text{keepdim=True})$
    \STATE $grad = \frac{pred\_fake\_action - pred\_real\_action}{weighting\_factor}$
    \STATE $loss = 0.5 \times \text{mse\_loss}(\bm{a}_\theta, \text{stopgrad}(\bm{a}_\theta - grad))$
    \STATE Update $\pi_\theta$ by $loss$
    \STATE {Diffusion Fake Policy Learning:} Update $\mu_{\xi}$ by $\mathcal{L}(\xi)$
\ENDFOR
\end{algorithmic}
\end{algorithm}

\section{Implementation Details}
\label{appendix:nn}
\paragraph{Diffusion Policy}
We build our policy as an MLP-based conditional diffusion model. The model itself is an action prediction model. We model $\mu_\phi$ and $\mu_\xi$ as 4-layer MLPs with Mish activations, using 256 hidden units for all networks. The input to $\mu_\phi$ and $\mu_\xi$ is the concatenation of the noisy action vector, the current state vector, and the sinusoidal positional embedding of timestep $t$. The output of $\mu_\phi$ and $\mu_\xi$ is the predicted action at diffusion timestep $t$.
\paragraph{Q and V Networks}
We build two Q networks and a V network with the same MLP setting as our diffusion policy. Each network comprises 4-layer MLPs with Mish activations and 256 hidden units.
\paragraph{Stochastic Max Q Trick} Similar to DQL \cite{wang2022diffusion}, during inference, we generate $N$ candidate actions and then randomly select an action according to $\exp(Q(\bm{a}, \bm{s}))$. Here, $N$ is fixed at $1024$ and remains unchanged across different tasks.
\paragraph{One-Step Policy}
We build a Gaussian policy using 3-layer MLPs with ReLU activations, utilizing 256 hidden units. After sampling an action, we apply a tanh activation to ensure the action lies between $[-1,1]$. If an implicit policy is instantiated, its structure is the same as that of the diffusion~policy.

\paragraph{Pretrain}

In our implementation, we pretrain the diffusion policy $\mu_\phi$ and the Q function $Q_\eta$ for 50 epochs to ensure they can better guide $\pi_\theta$. Then, $\mu_\phi$, $Q_\eta$, and $\pi_\theta$ are concurrently trained for the epochs specified in Table \ref{table:hyper}. We found that introducing a pretrain schedule does not significantly influence the final performance. 
Our ablation study on the Gym Medium Task revealed that while pretraining yields slightly better results, the final rewards are largely similar. Therefore, we maintain a 50-epoch pretrain for all our tasks. The results are shown in Table \ref{table:pretrain}.

\begin{table}[ht]
    \centering
    \caption{The performance with and without pretraining on D4RL Gym tasks.}
    \resizebox{0.5\textwidth}{!}{
    \begin{tabular}{lcc}
        \hline
        Environment & Pretrain & No Pretrain \\
        \hline
        halfcheetah-medium-v2 & 57.9 & 57.5 \\
        hopper-medium-v2 & 99.6 & 87.6 \\
        walker2d-medium-v2 & 89.4 & 88.7 \\
        \hline
    \end{tabular}}
    \label{table:pretrain}
\end{table}

\newpage
\section{Hyperparamaters}
\label{appendix:hyper}

\begin{table}[ht]
\centering
\caption{Hyperparameters for D4RL benchmarks. One epoch represents 1k steps, and the optimizer used is Adam.}
\label{table:hyper}
\resizebox{\textwidth}{!}{
\begin{tabular}{@{}lccccccc@{}}
\toprule
\textbf{Gym} & \textbf{$\bm\alpha$}& $\bm \tau$ & \textbf{NLL Term} & \textbf{Pretrain Epochs} & \textbf{Training Epochs} & \textbf{Learning Rate} &  \textbf{Lr decay} \\
\midrule
halfcheetah-medium-v2 & 1 &0.7 & False & 50 & 1000 & $3\times10^{-4}$ & False\\
halfcheetah-medium-replay-v2  & 5&0.7 & False & 50 & 1000 & $3\times10^{-4}$ & False\\
halfcheetah-medium-expert-v2 & 50&0.7 & False & 50 & 1000 & $3\times10^{-4}$ & False\\
hopper-medium-v2 & 5 &0.7& False & 50 & 1000 & $1\times10^{-4}$ & True \\
hopper-medium-replay-v2 & 5&0.7 & False & 50 & 1000 & $3\times10^{-4}$ & False\\
hopper-medium-expert-v2 & 20&0.7 & False & 50 & 1000 & $3\times10^{-4}$ & False\\
walker2d-medium-v2 & 5&0.7 & False & 50 & 1000 & $3\times10^{-4}$ & True\\
walker2d-medium-replay-v2 & 5&0.7 & False & 50 & 1000 & $3\times10^{-4}$ & True\\
walker2d-medium-expert-v2 & 5&0.7 & False & 50 & 1000 & $3\times10^{-4}$ & True\\
\midrule
antmaze-umaze-v0 & 1 & 0.9 & True & 50 & 500 & $3\times 10^{-4}$ & False\\
antmaze-umaze-diverse-v0  & 1 & 0.9 & True & 50 & 500 & $3\times 10^{-5}$ & True\\
antmaze-medium-play-v0  & 1 & 0.9 & True & 50 & 400 & $3\times 10^{-4}$ & False\\
antmaze-medium-diverse-v0  & 1 & 0.9 & True & 50 & 400 & $3\times 10^{-4}$ & False\\
antmaze-large-play-v0  & 1 & 0.9 & True & 50 & 350 & $3\times 10^{-4}$ & False\\
antmaze-large-diverse-v0  & 0.5 & 0.9 & True & 50 & 300 & $3\times 10^{-4}$ & False\\
\midrule
antmaze-umaze-v2 & 1 & 0.9 & True & 50 & 500 & $3\times 10^{-4}$ & False\\
antmaze-umaze-diverse-v2  & 1 & 0.9 & True & 50 & 500 & $3\times 10^{-5}$ & True\\
antmaze-medium-play-v2  & 1 & 0.9 & True & 50 & 500 & $3\times 10^{-4}$ & False\\
antmaze-medium-diverse-v2  & 1 & 0.9 & True & 50 & 500 & $3\times 10^{-4}$ & False\\
antmaze-large-play-v2  & 1 & 0.9 & True & 50 & 500 & $3\times 10^{-4}$ & False\\
antmaze-large-diverse-v2  & 0.5 & 0.9 & True & 50 & 500 & $3\times 10^{-4}$ & False\\
\midrule
pen-human-v1  & 1500 & 0.9 & False & 50 & 300 & $3\times 10^{-5}$ & True\\
pen-cloned-v1 & 1500 & 0.7 & False & 50 & 200 & $1\times 10^{-5}$ & False\\
\midrule
kitchen-complete-v0 & 200 & 0.7 & False & 50 & 500 & $1\times 10^{-4}$ & True\\
kitchen-partial-v0  & 100 & 0.7 & False & 50 & 1000 & $1\times 10^{-4}$ & True\\
kitchen-mixed-v0  & 200 & 0.7 & False & 50 & 500 & $3\times 10^{-4}$ & True\\
\bottomrule
\end{tabular}
}
\end{table}

\newpage
\section{Additional Experiments}
\label{appendix:experiment}

\subsection{Complete 2D Toy Experiments}
\label{appendix:toy}
We also conducted some 2D bandit experiments with different reward scenarios. In Figure \ref{fig:appendix_toy}, red points are generated by the one-step policy $\pi_\theta$. 

In the first column, where the four corners have the same high reward, $\mathcal{L}_{\KL}$ tends to encourage exploration of all these high-reward regions, resulting in some suboptimal reward actions. In contrast, $\mathcal{L}_{\TR}$ generates actions that randomly select one of the high-reward regions, thereby avoiding suboptimal actions. The same situation occurs in the fourth and fifth columns of Figure \ref{fig:appendix_toy}, where $\mathcal{L}_{\KL}$ covers some suboptimal regions while $\mathcal{L}_{\TR}$ adheres closely to the highest reward regions. 

However, when the data have only one mode with the highest reward, such as in the second and third columns of Figure \ref{fig:appendix_toy}, both $\mathcal{L}_{\KL}$ and $\mathcal{L}_{\TR}$ guide the policy to generate high-reward actions.

\begin{figure}[ht]
    \centering
    \includegraphics[width=\textwidth]{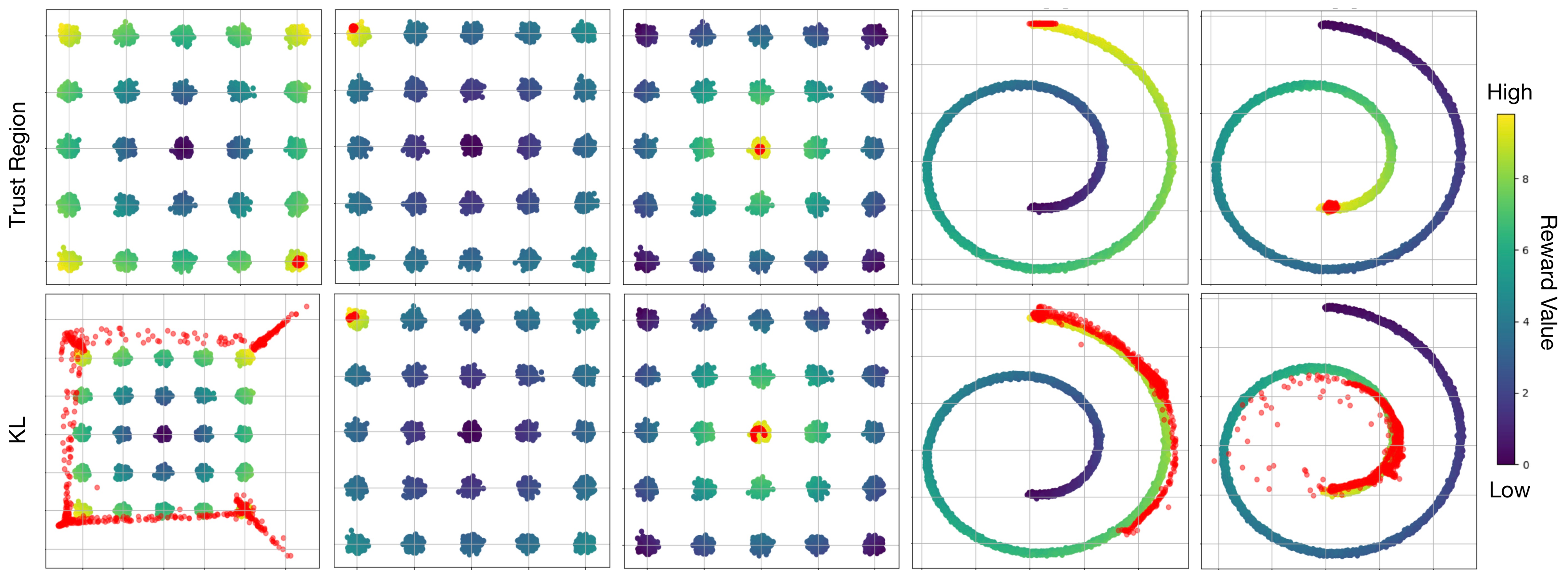}
    \caption{2D Bandit toy examples, where the behavior regularization is conducted by $\mathcal{L}_{\TR}$ and $\mathcal{L}_{\KL}$ in different behavior data and reward scenarios. The first row uses behavior regularization by $\mathcal{L}_{\TR}$, and the second row uses $\mathcal{L}_{\KL}$. Yellow indicates the highest reward, and dark blue indicates the lowest reward.}
    \label{fig:appendix_toy}
\end{figure}

\subsection{Comparison with KL behavior Regularization in Gym Tasks}
\label{appendix:kl_gym}
In addition to testing on 2D bandit scenarios, we also evaluated the performance of two losses $\gL_{\KL}$ and $\gL_{\TR}$ on the Mujoco Gym Medium task. The behavior regularization loss $\mathcal{L}_{\TR}(\theta)$ consistently outperformed $\mathcal{L}_{\KL}(\theta)$ in terms of achieving higher rewards. The results are presented in Table \ref{table:pb_with_kl}, and the training curves are depicted in Figure \ref{fig:pb_with_kl}.

\begin{table}[ht]
    \centering
    \caption{The performance of $\mathcal{L}_{\TR}(\theta)$ and $\mathcal{L}_{\KL}(\theta)$ on D4RL Gym tasks. Results correspond to the mean of normalized scores over 50 random rollouts (5 independently trained models and 10 trajectories per model).}
\resizebox{0.5\textwidth}{!}{\begin{tabular}{llcc}
\hline
 Environment & $\mathcal{L}_{\TR}(\theta)$ & $\mathcal{L}_{\KL}(\theta)$ \\
\hline
halfcheetah-medium-v2 & $\bm{57.9}$ & 24.1    \\
hopper-medium-v2 & $\bm{99.6}$ & 15.0    \\
walker2d-medium-v2 & $\bm{89.4}$ & 3.4 \\
\hline
\end{tabular}}
\label{table:pb_with_kl}
\end{table}

\begin{figure}[ht]
    \centering
\begin{subfigure}{0.31\textwidth}
    \centering
    \includegraphics[width=\linewidth]{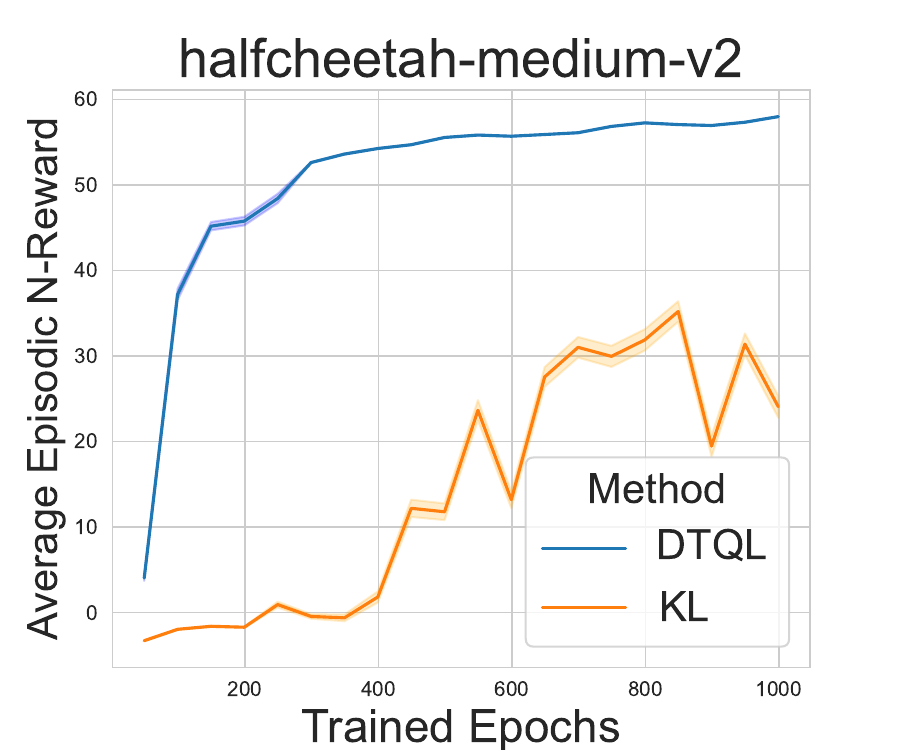}
    \caption*{}
\end{subfigure}
\begin{subfigure}{0.31\textwidth}
    \centering
    \includegraphics[width=\linewidth]{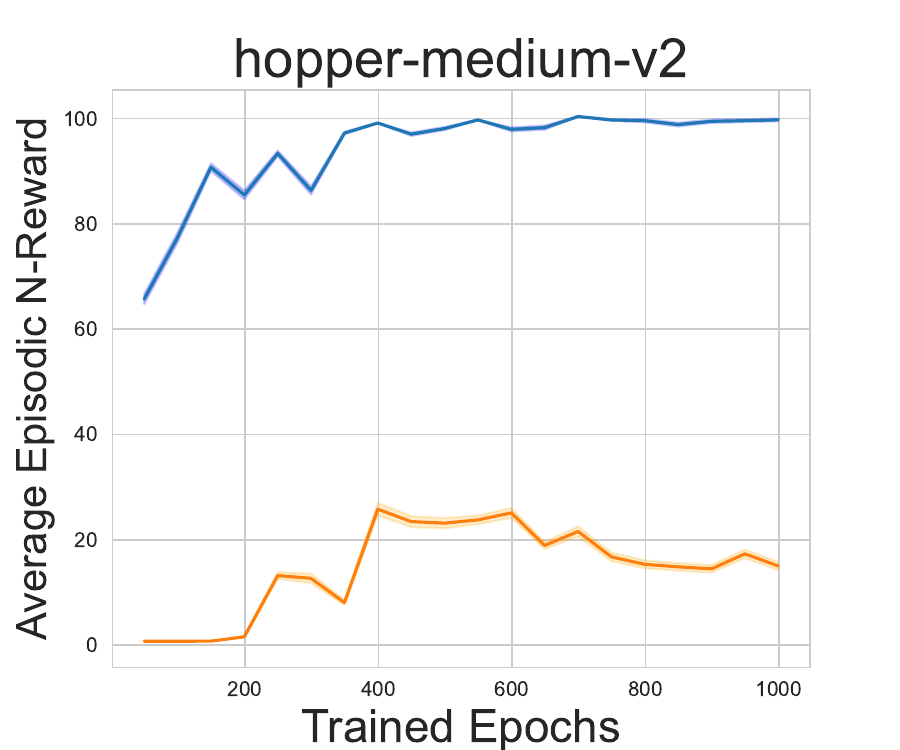}
    \caption*{}
\end{subfigure}
\begin{subfigure}{0.31\textwidth}
    \centering
    \includegraphics[width=\linewidth]{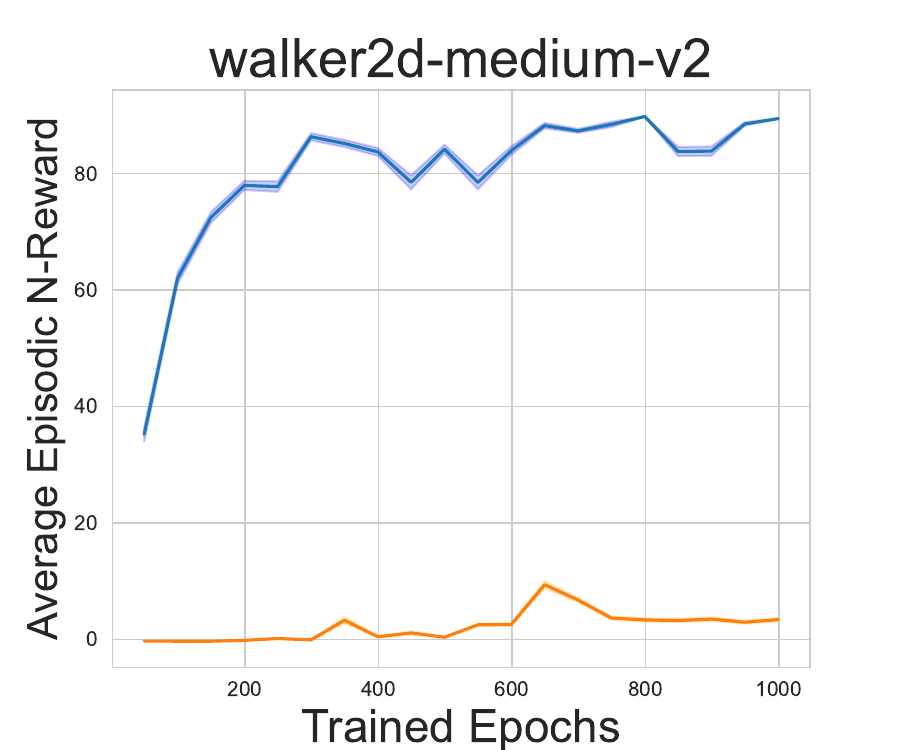}
    \caption*{}
\end{subfigure}
\captionof{figure}{Training curves comparing policy learning with diffusion trust region loss and KL loss across three Gym medium tasks demonstrate that diffusion trust region regularization (DTQL) consistently outperforms KL-based behavior regularization in policy learning.}
\label{fig:pb_with_kl}
\end{figure}


\subsection{Comparison with SRPO on Antmaze-v2 Datasets}
\label{appendix:srpo}
Since SRPO uses Antmaze-v2 for their D4RL benchmarks, we also conducted experiments on Antmaze-v2 using our algorithm, with the same hyperparameters as those used in Antmaze-v0 but with more training epochs. Hyperparameters details can be found in Table \ref{table:hyper}. The results for Antmaze-v2 from SRPO are taken directly from their paper. 

The results for Antmaze-v2 are shown in Table \ref{table:srpo}. Our observations indicate that, on average, our method achieves a higher score and exhibits significant performance improvements in complex Antmaze tasks, such as \textit{antmaze-medium-diverse}, \textit{antmaze-large-play}, and \textit{antmaze-large-diverse}.

\begin{table}[H]
\centering
\caption{The performance of Our methods and SOTA baselines on D4RL AntMaze-v2 tasks. Results for DTQL correspond to the mean and standard errors of normalized scores over 500 random rollouts.}
\resizebox{0.5\textwidth}{!}{
\label{table:srpo}
\begin{tabular}{@{}lcc@{}}
\toprule
\textbf{Antmaze} & \textbf{SRPO} & \textbf{Ours} \\
\midrule
antmaze-umaze-v2 & 97.1 & 92.6$\pm$1.24\\
antmaze-umaze-diverse-v2 & 82.1 & 74.4$\pm$1.95\\
antmaze-medium-play-v2 & 80.7 & 76$\pm$1.91\\
antmaze-medium-diverse-v2 & 75.0 & \textbf{80.6}$\pm$1.77\\
antmaze-large-play-v2 & 53.6 & \textbf{59.2}$\pm$2.19\\
antmaze-large-diverse-v2 & 53.6 & \textbf{62}$\pm$2.17\\
\midrule
\textbf{Average} & 73.6 & \textbf{74.1} \\
\bottomrule
\end{tabular}}
\end{table}

\newpage
\subsection{Overall Training and Inference Time}
\label{appendix:wall_time}
In Table \ref{table:all_time}, we show the total training and inference wall time recorded on 8 RTX-A5000 GPU servers, which include all training epochs specified in Table \ref{table:hyper} and the entire evaluation process. For evaluation, we test 10 trajectories for gym tasks and 100 trajectories for all other tasks.

\begin{table}[ht]
\centering
\caption{Total training and inference wall time for D4RL benchmarks}
\label{table:all_time}
\resizebox{0.7\textwidth}{!}{
\begin{tabular}{@{}lcc@{}}
\toprule
\textbf{Tasks} & \textbf{Overall Training and Inference Time} & \textbf{Training Epochs}\\
\midrule
halfcheetah-medium-v2 & 5.1h &1000\\
halfcheetah-medium-replay-v2 &5.1h&1000\\
halfcheetah-medium-expert-v2 &5.5h &1000\\
hopper-medium-v2  &5.0h&1000\\
hopper-medium-replay-v2 &5.4h &1000\\
hopper-medium-expert-v2 &5.2h&1000\\
walker2d-medium-v2 &4.9h&1000\\
walker2d-medium-replay-v2&4.9h&1000\\
walker2d-medium-expert-v2 &4.9h&1000\\
\midrule
antmaze-umaze-v0 &3.3h&500\\
antmaze-umaze-diverse-v0  & 4.0h&500\\
antmaze-medium-play-v0 &3.1h &400\\
antmaze-medium-diverse-v0 &3.2h&400\\
antmaze-large-play-v0  & 2.3h&350\\
antmaze-large-diverse-v0 & 2.6h&300\\
\midrule
antmaze-umaze-v2 & 3.3h&500\\
antmaze-umaze-diverse-v2 &3.1h&500\\
antmaze-medium-play-v2 &3.1h &500\\
antmaze-medium-diverse-v2 &3.1h &500\\
antmaze-large-play-v2 & 3.3h&500\\
antmaze-large-diverse-v2 & 3.3h&500\\
\midrule
pen-human-v1 & 1.4h&300\\
pen-cloned-v1 &0.6h&200\\
\midrule
kitchen-complete-v0 & 3.0h &500\\
kitchen-partial-v0 &6.1h &1000\\
kitchen-mixed-v0 &3.0h&500\\
\bottomrule
\end{tabular}
}
\end{table}

\newpage
\section{Training Curves}
\label{appendix:curves}
\begin{figure}[htbp]
    \centering
    \begin{minipage}[b]{0.31\textwidth}
        \centering
        \includegraphics[width=\textwidth]{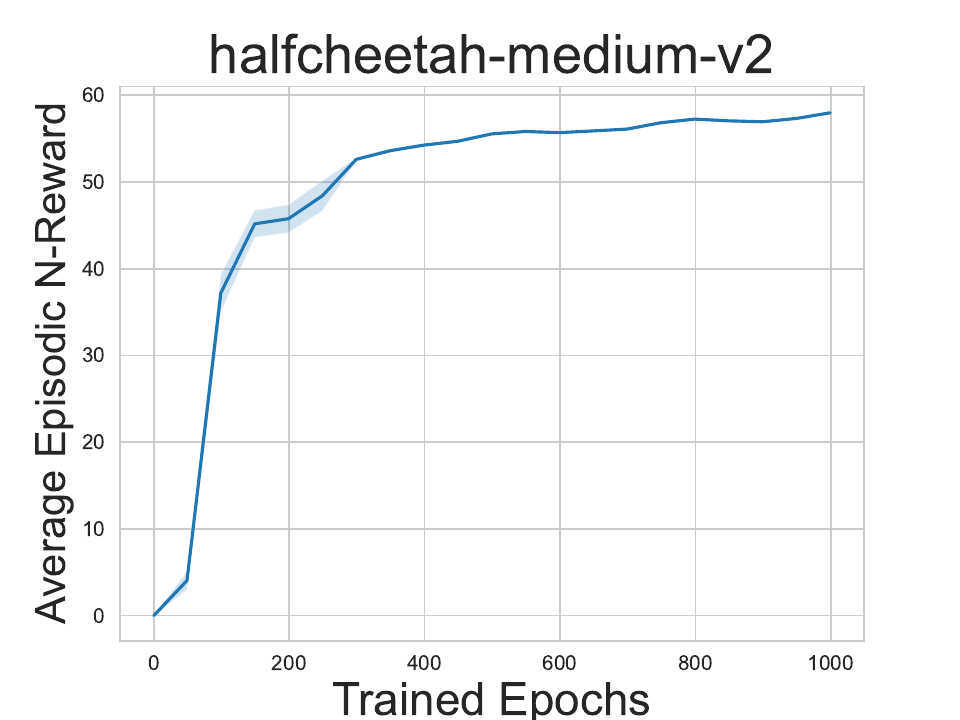}
    \end{minipage}
    \begin{minipage}[b]{0.31\textwidth}
        \centering
        \includegraphics[width=\textwidth]{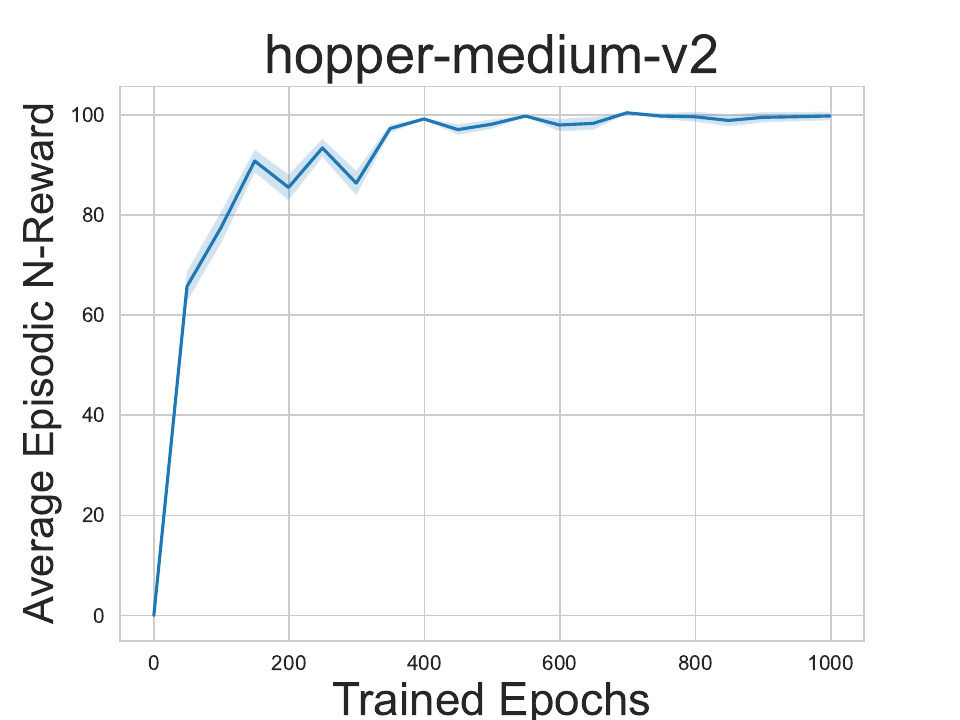}
    \end{minipage}
    \begin{minipage}[b]{0.31\textwidth}
        \centering
        \includegraphics[width=\textwidth]{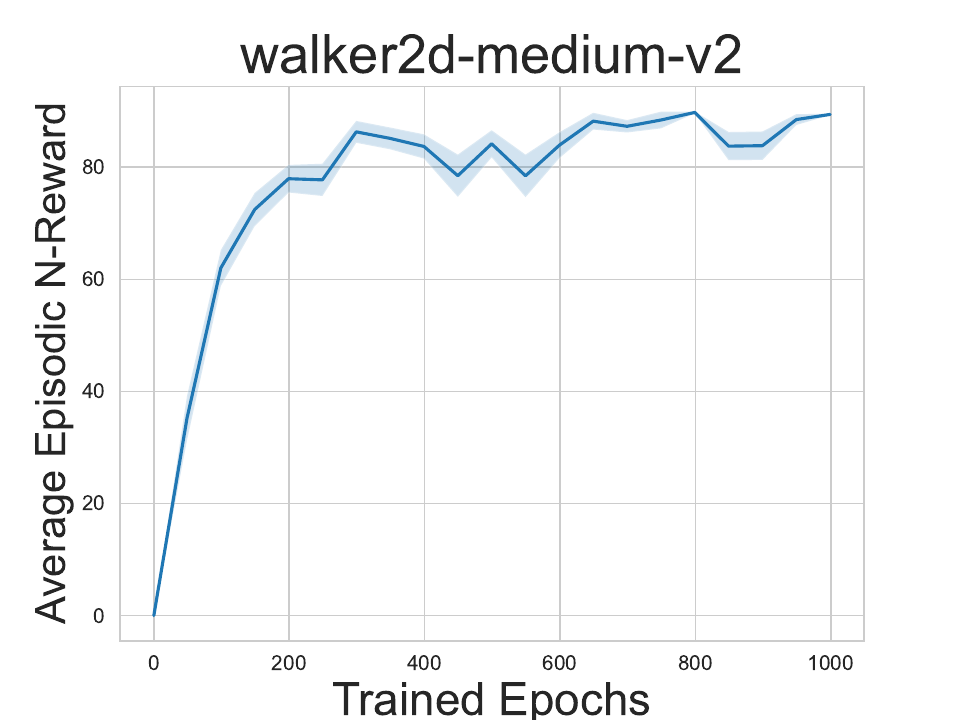}
    \end{minipage}
    \hfill
    \begin{minipage}[b]{0.31\textwidth}
        \centering
        \includegraphics[width=\textwidth]{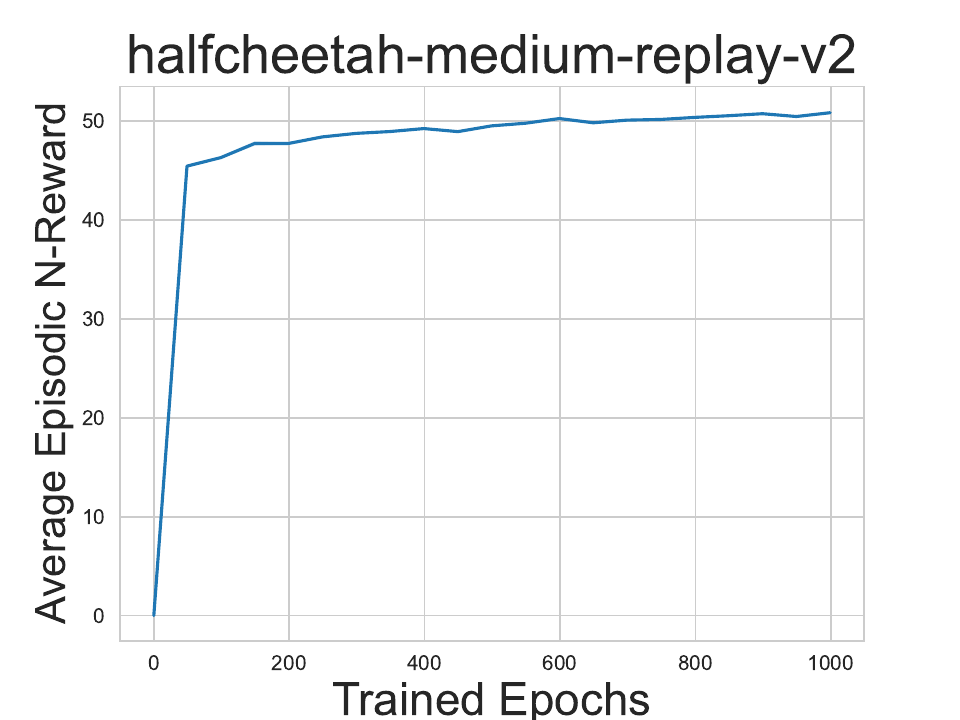}
    \end{minipage}
    \begin{minipage}[b]{0.31\textwidth}
        \centering
        \includegraphics[width=\textwidth]{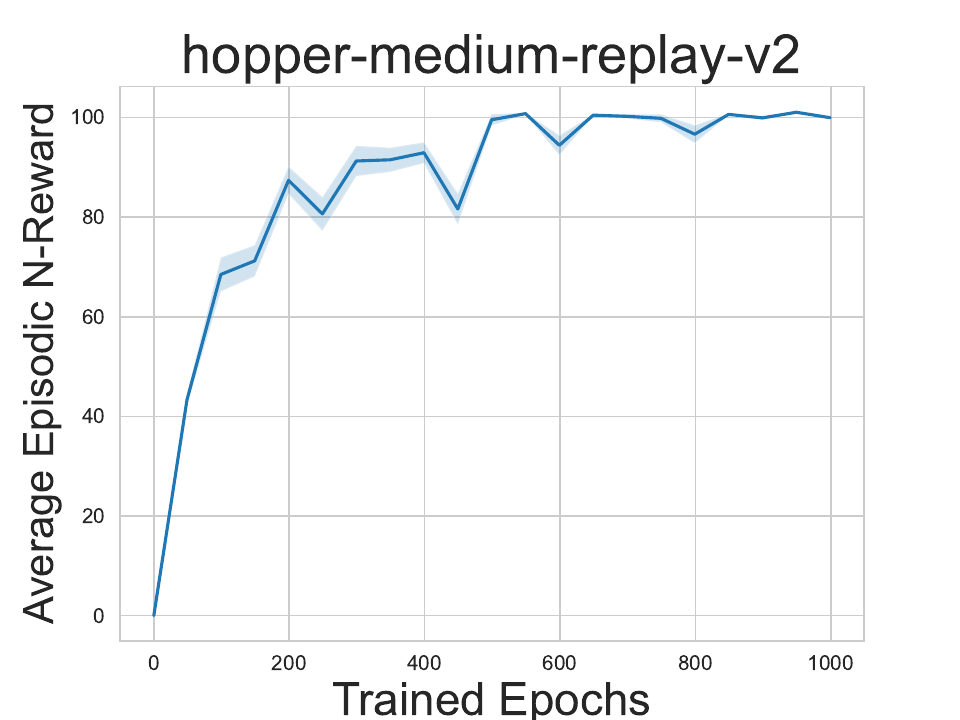}
    \end{minipage}
    \begin{minipage}[b]{0.31\textwidth}
        \centering
        \includegraphics[width=\textwidth]{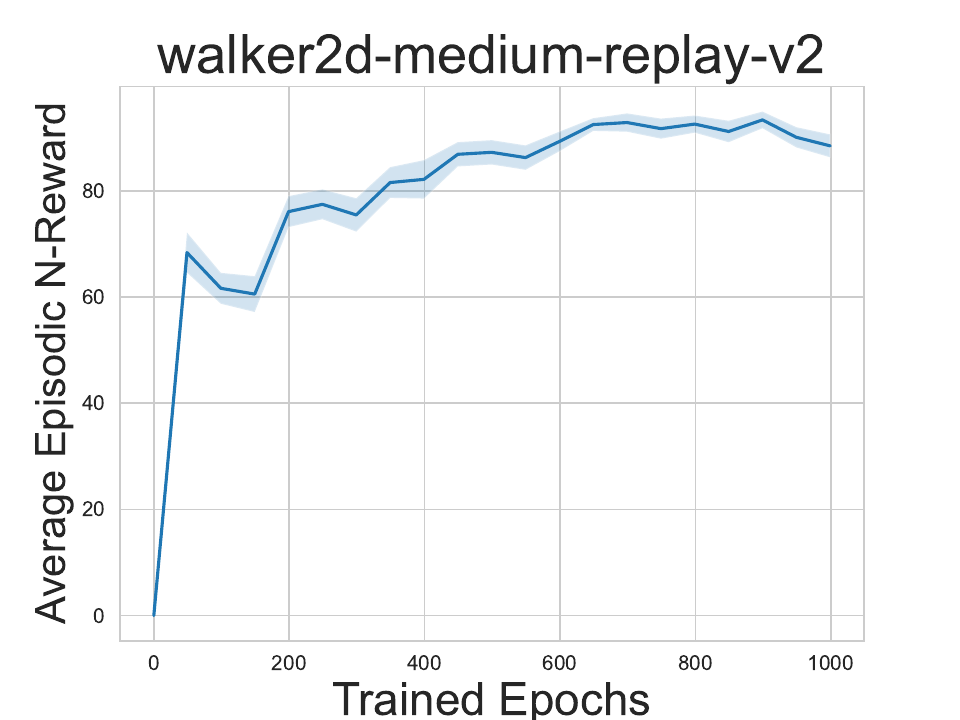}
    \end{minipage}
    \hfill
    \begin{minipage}[b]{0.31\textwidth}
        \centering
        \includegraphics[width=\textwidth]{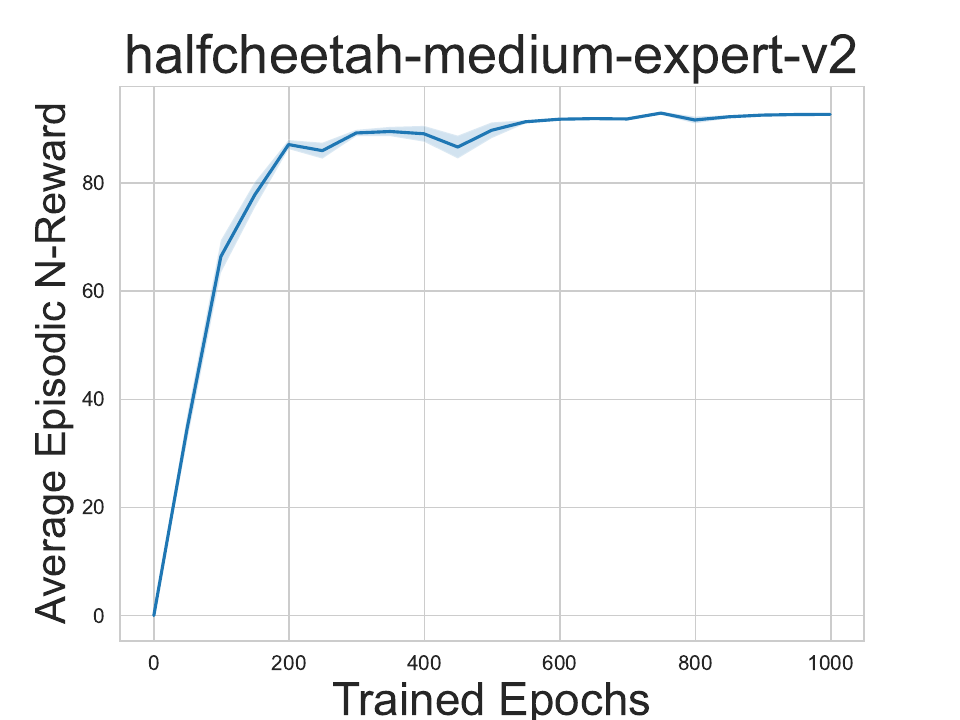}
    \end{minipage}
    \begin{minipage}[b]{0.31\textwidth}
        \centering
        \includegraphics[width=\textwidth]{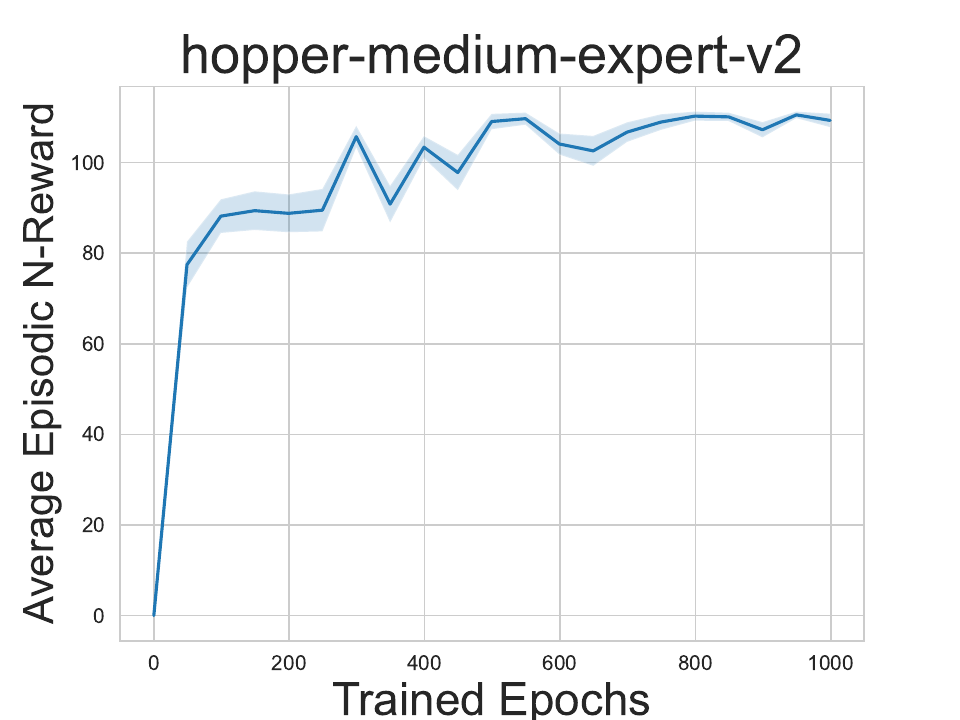}
    \end{minipage}
    \begin{minipage}[b]{0.31\textwidth}
        \centering
        \includegraphics[width=\textwidth]{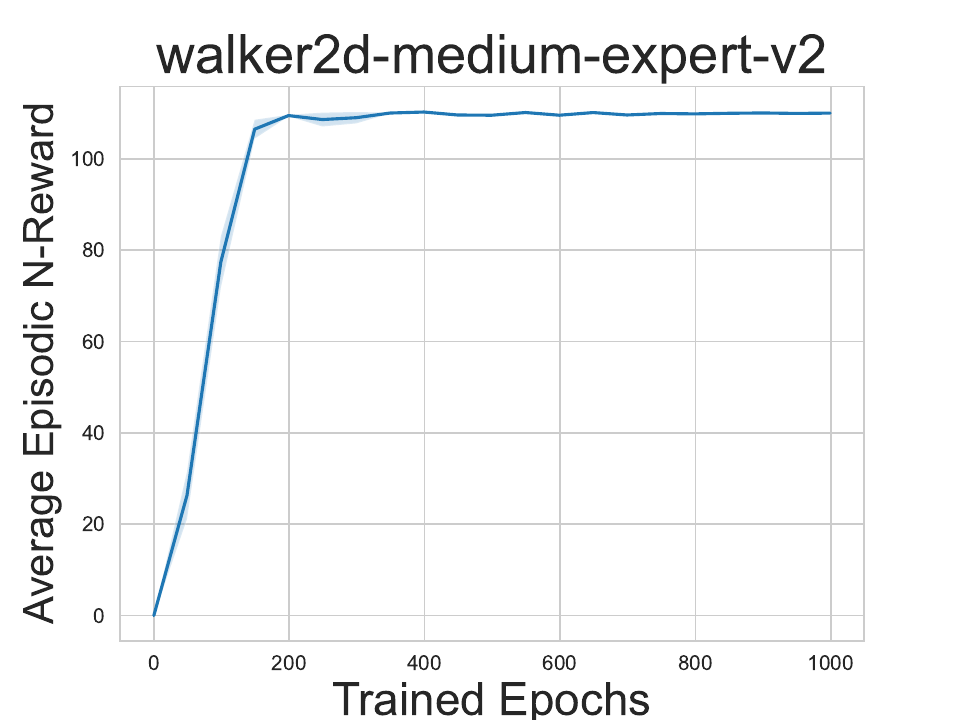}
    \end{minipage}
    \hfill
    \begin{minipage}[b]{0.31\textwidth}
        \centering
        \includegraphics[width=\textwidth]{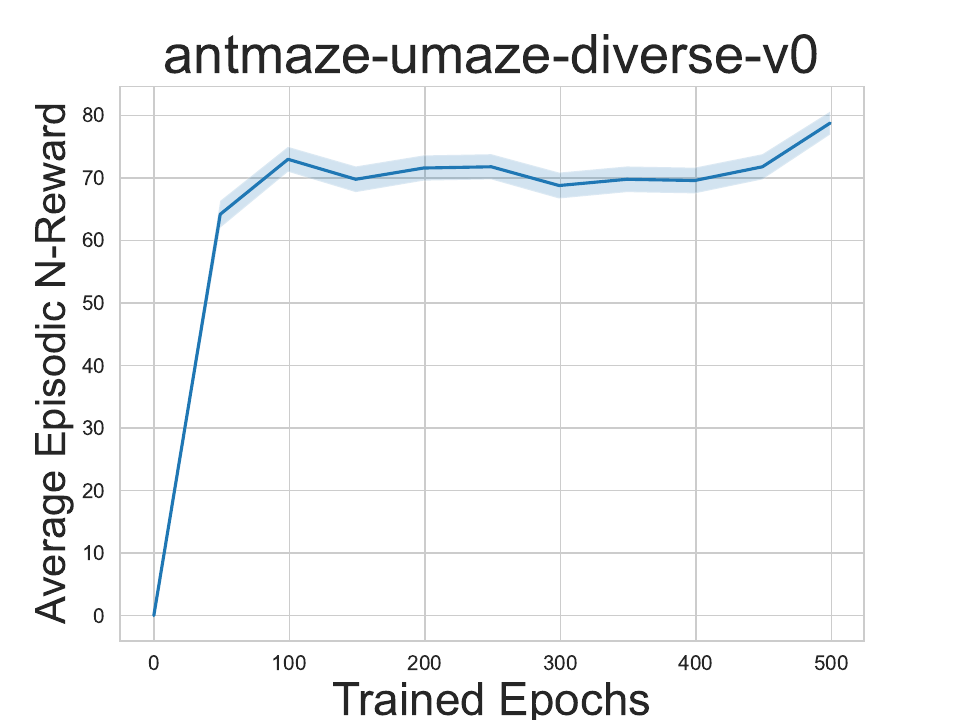}
    \end{minipage}
    \begin{minipage}[b]{0.31\textwidth}
        \centering
        \includegraphics[width=\textwidth]{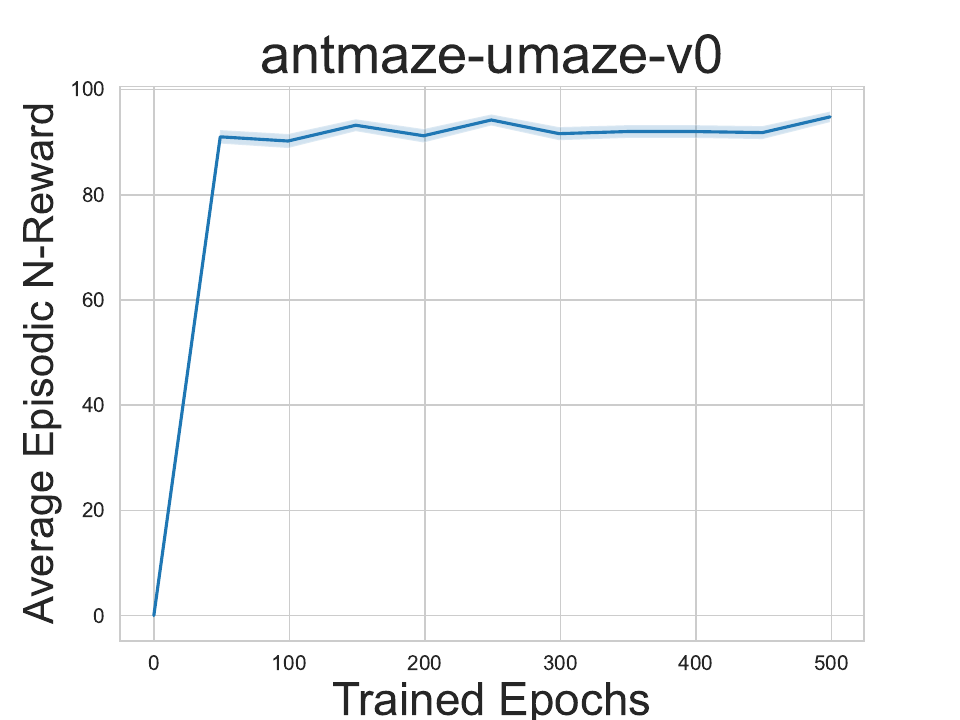}
    \end{minipage}
    \begin{minipage}[b]{0.31\textwidth}
        \centering
        \includegraphics[width=\textwidth]{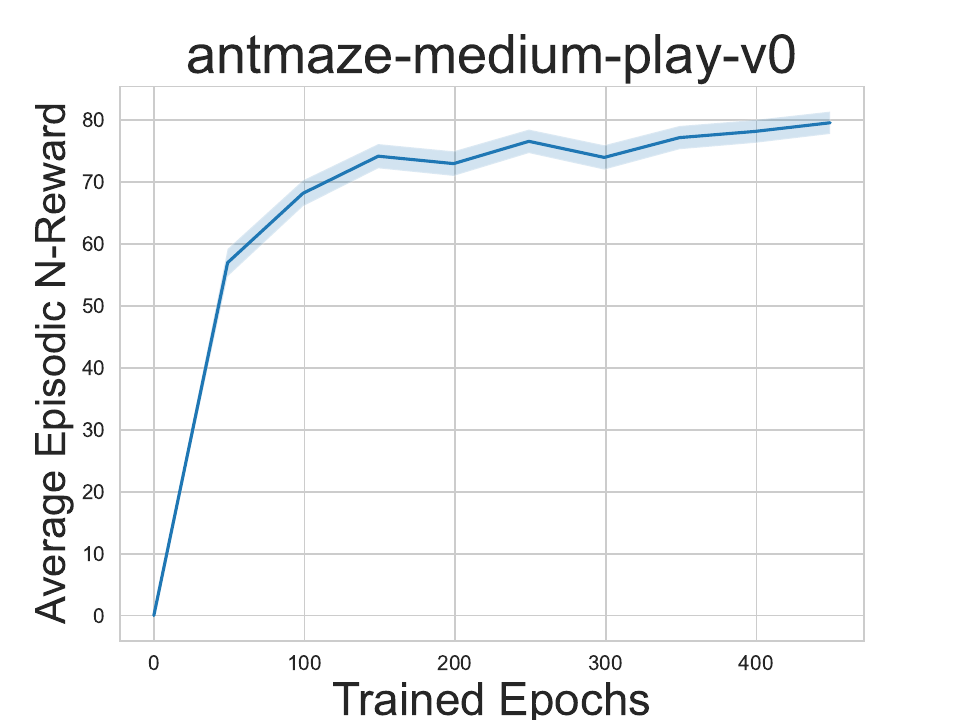}
    \end{minipage}
    \hfill
    \begin{minipage}[b]{0.31\textwidth}
        \centering
        \includegraphics[width=\textwidth]{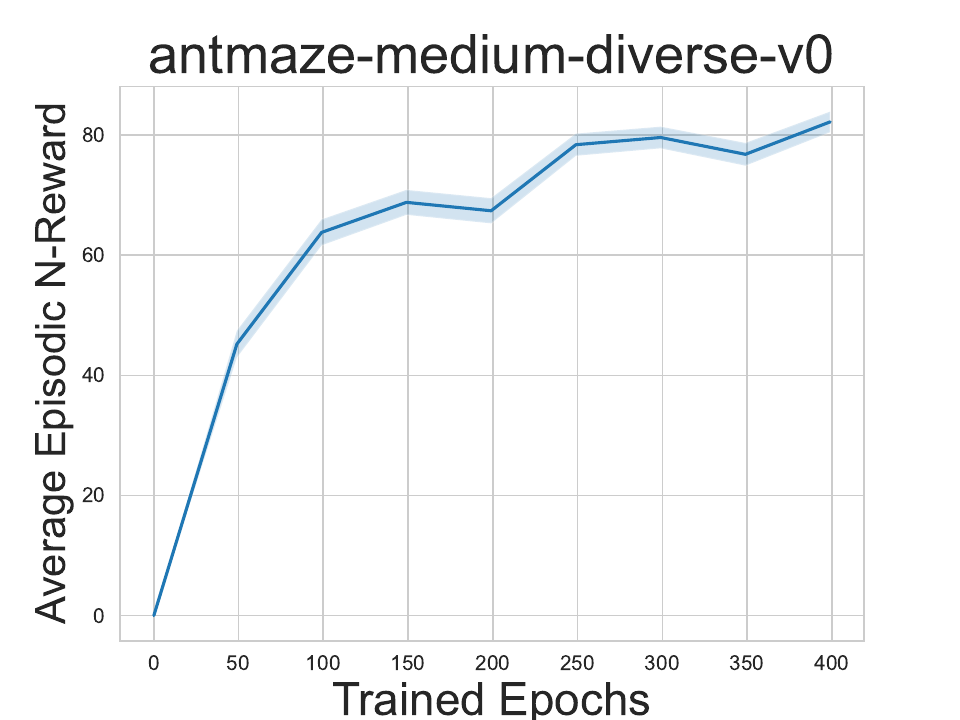}
    \end{minipage}
    \begin{minipage}[b]{0.31\textwidth}
        \centering
        \includegraphics[width=\textwidth]{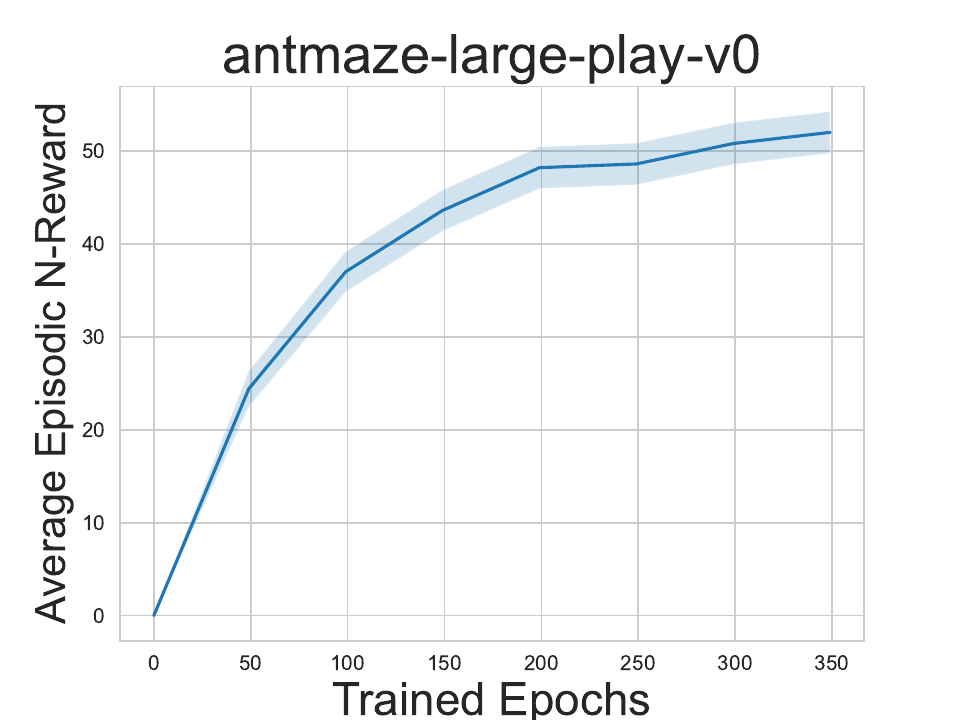}
    \end{minipage}
    \begin{minipage}[b]{0.31\textwidth}
        \centering
        \includegraphics[width=\textwidth]{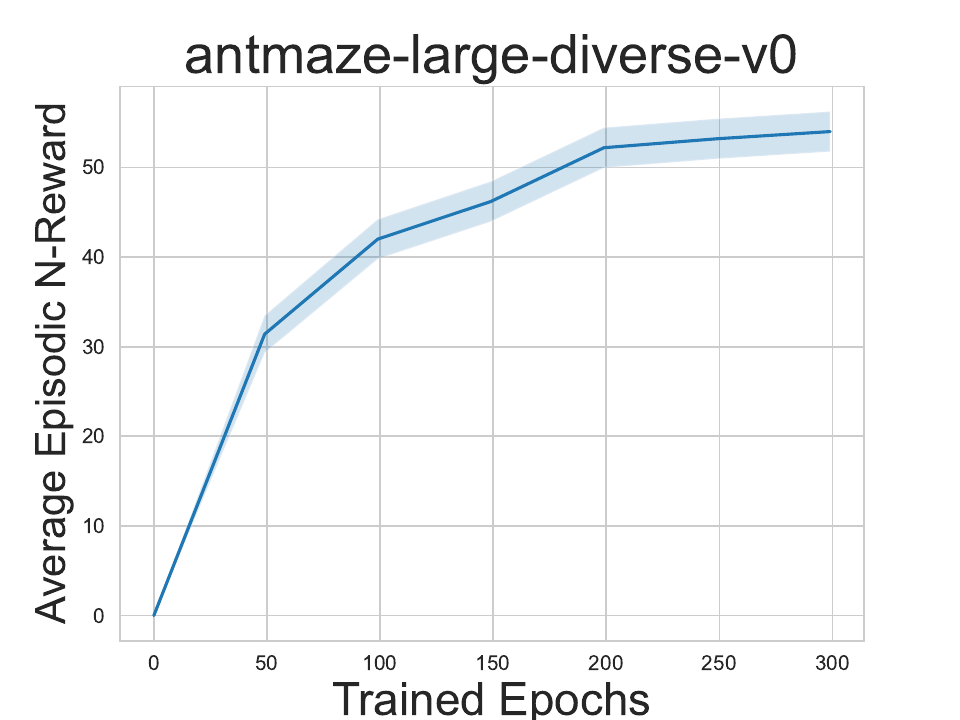}
    \end{minipage}
    \hfill
        \caption{Training curves. Rewards evaluated after every 50 epochs.}
    \vspace{-45mm}
\end{figure}

    \begin{figure}[htbp]
    \centering
    \begin{minipage}[b]{0.31\textwidth}
        \centering
        \includegraphics[width=\textwidth]{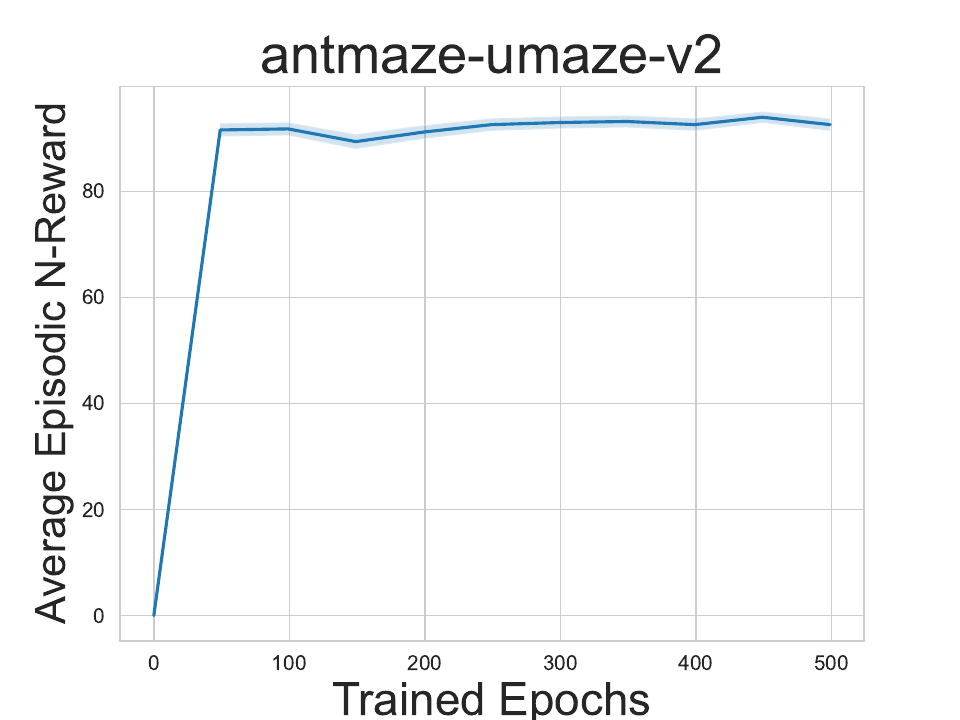}
    \end{minipage}
    \begin{minipage}[b]{0.31\textwidth}
        \centering
        \includegraphics[width=\textwidth]{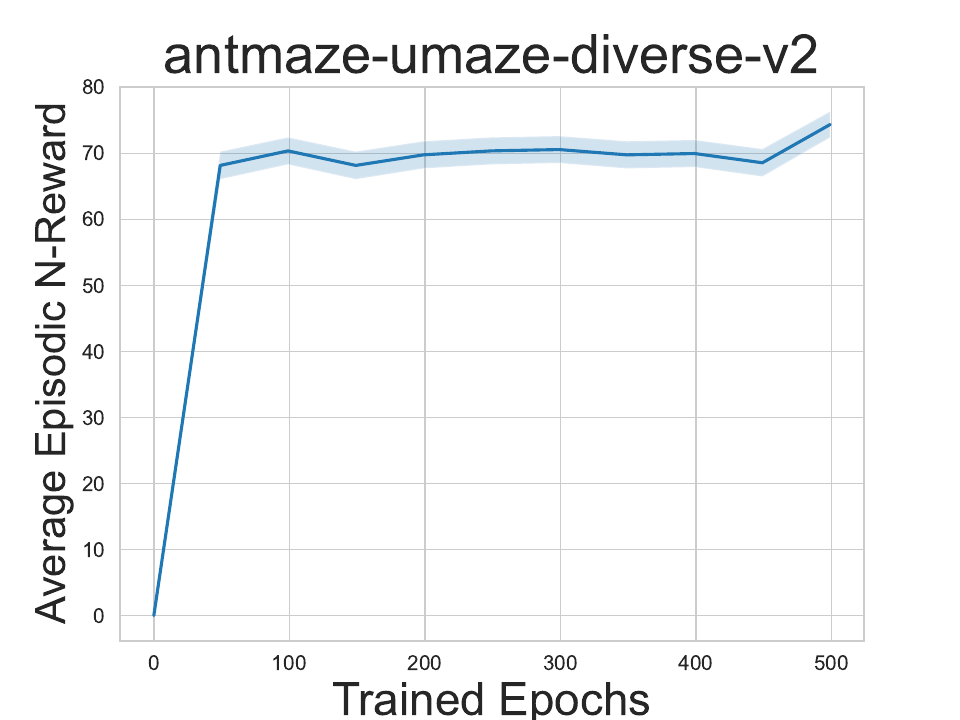}
    \end{minipage}
    \begin{minipage}[b]{0.31\textwidth}
        \centering
        \includegraphics[width=\textwidth]{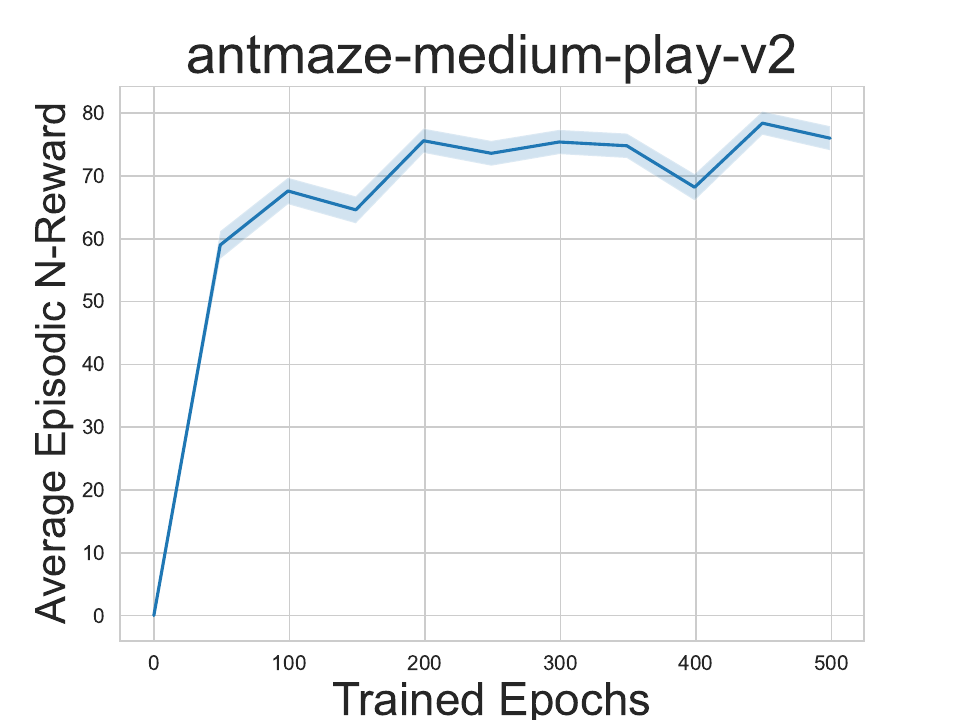}
    \end{minipage}
    \hfill
    \begin{minipage}[b]{0.31\textwidth}
        \centering
        \includegraphics[width=\textwidth]{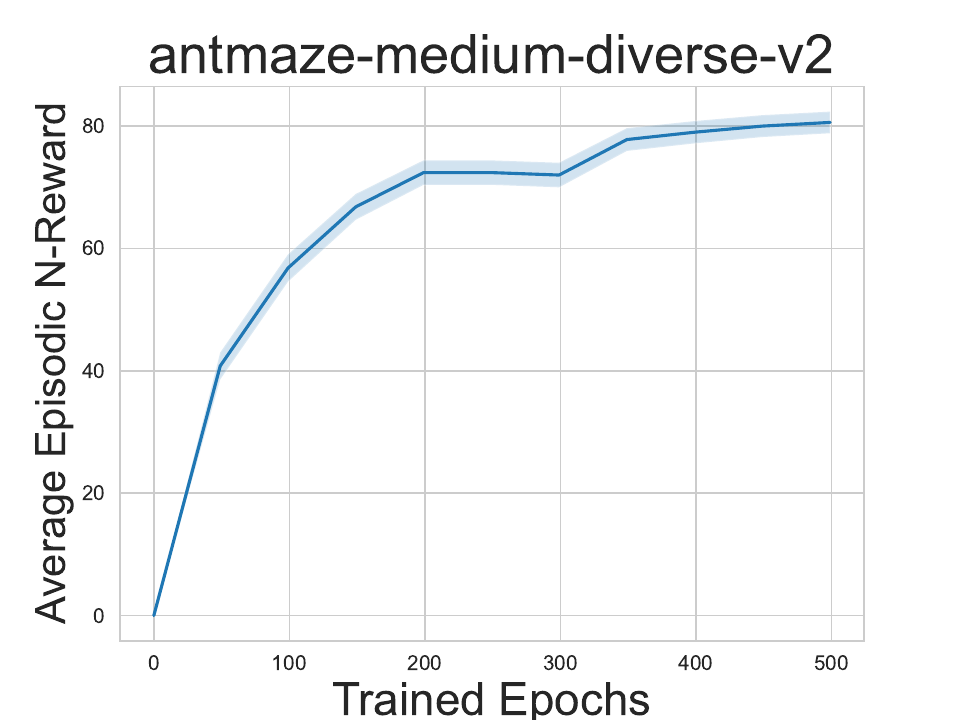}
    \end{minipage}
    \begin{minipage}[b]{0.31\textwidth}
        \centering
        \includegraphics[width=\textwidth]{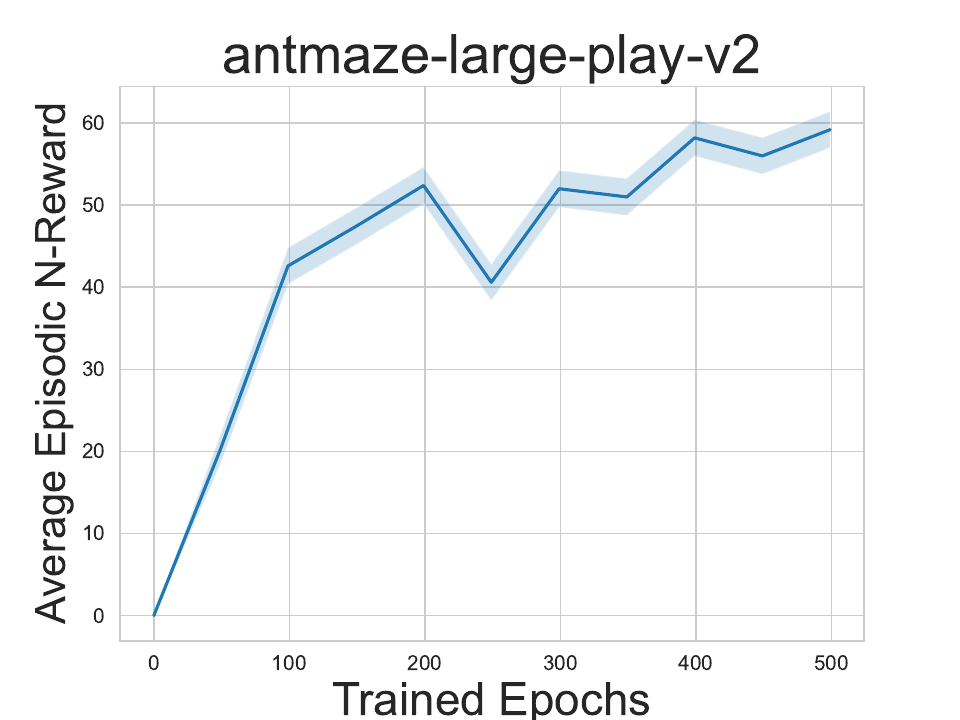}
    \end{minipage}
    \begin{minipage}[b]{0.31\textwidth}
        \centering
        \includegraphics[width=\textwidth]{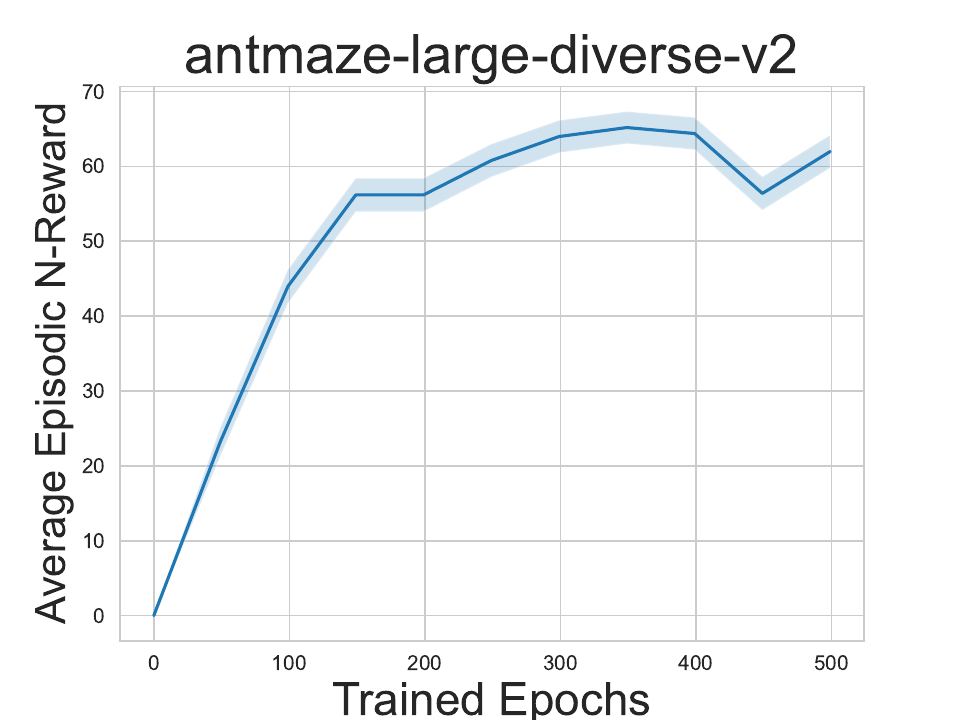}
    \end{minipage}
    \hfill
    \begin{minipage}[b]{0.31\textwidth}
        \centering
        \includegraphics[width=\textwidth]{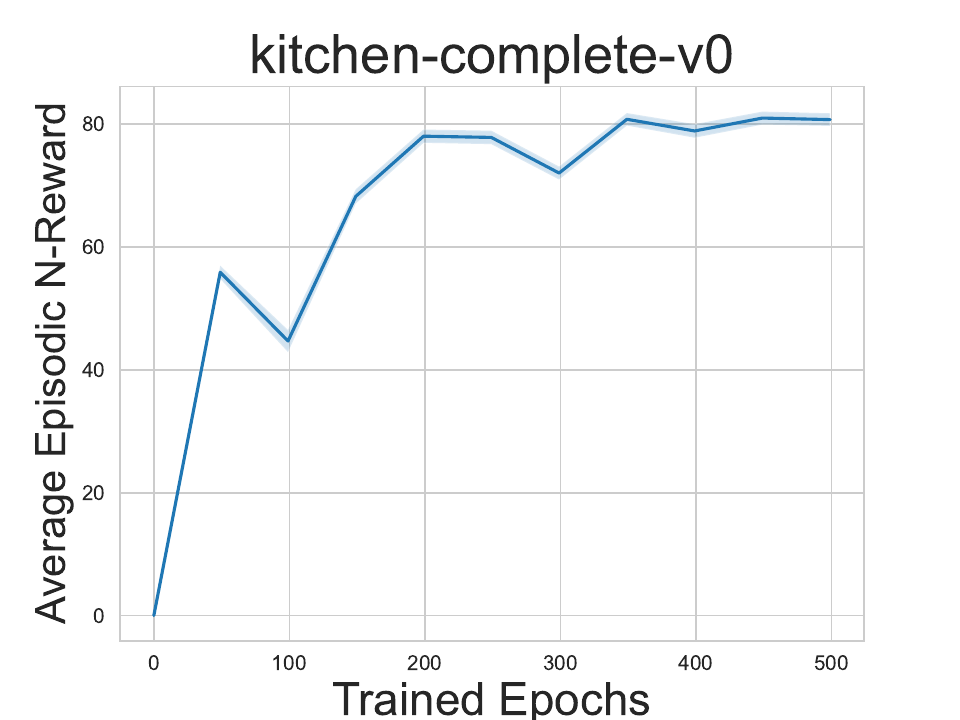}
    \end{minipage}
    \begin{minipage}[b]{0.31\textwidth}
        \centering
        \includegraphics[width=\textwidth]{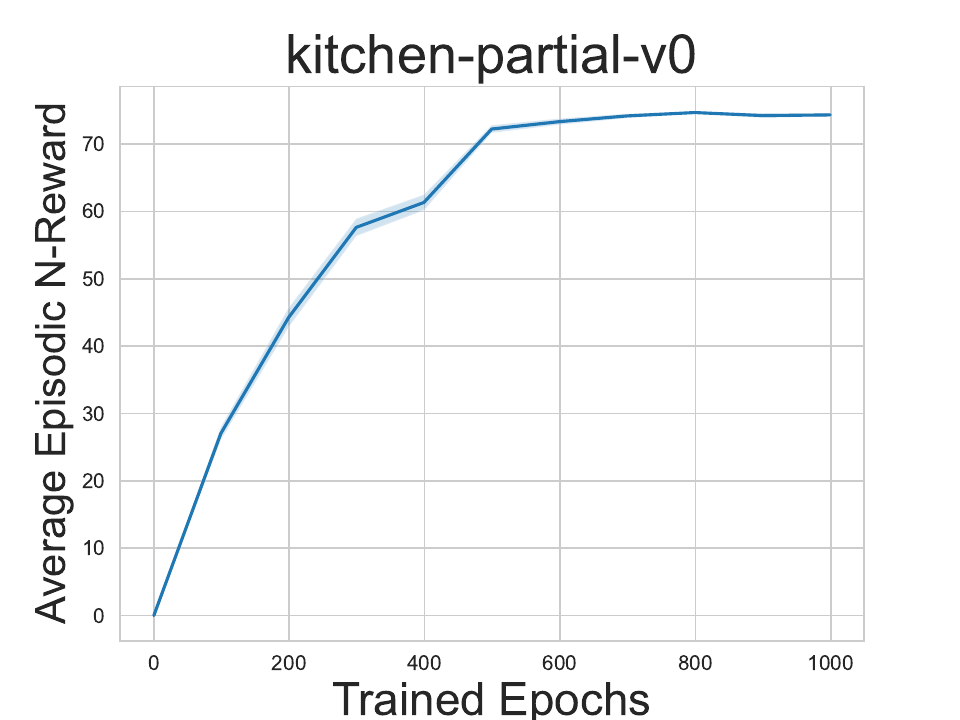}
    \end{minipage}
    \begin{minipage}[b]{0.31\textwidth}
        \centering
        \includegraphics[width=\textwidth]{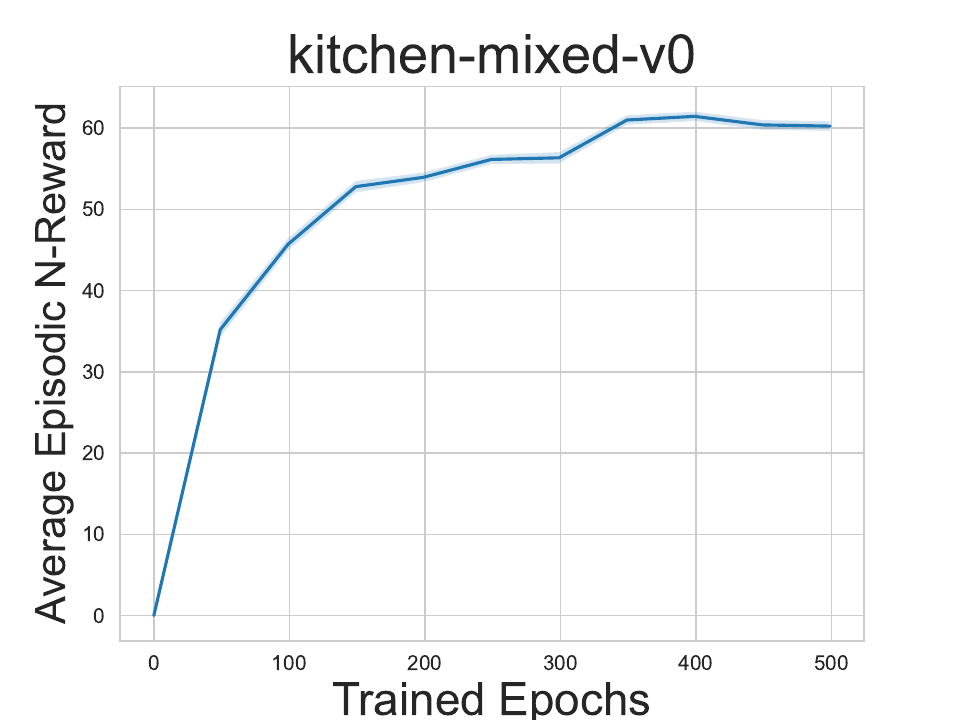}
    \end{minipage}
    \hfill
    \begin{minipage}[b]{0.31\textwidth}
        \centering
        \includegraphics[width=\textwidth]{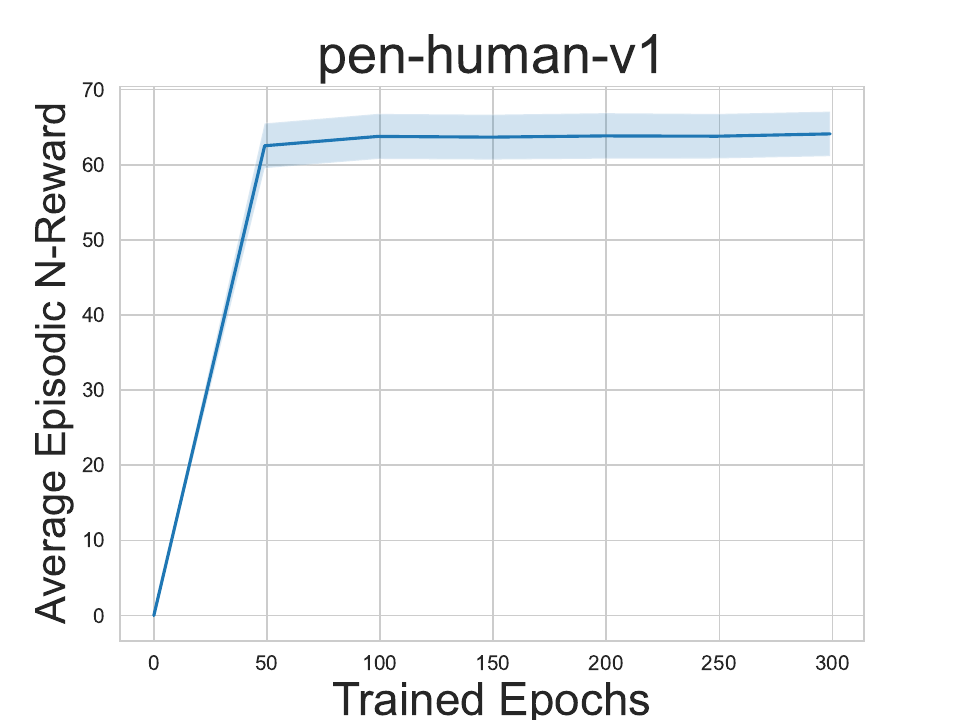}
    \end{minipage}
    \begin{minipage}[b]{0.31\textwidth}
        \centering
        \includegraphics[width=\textwidth]{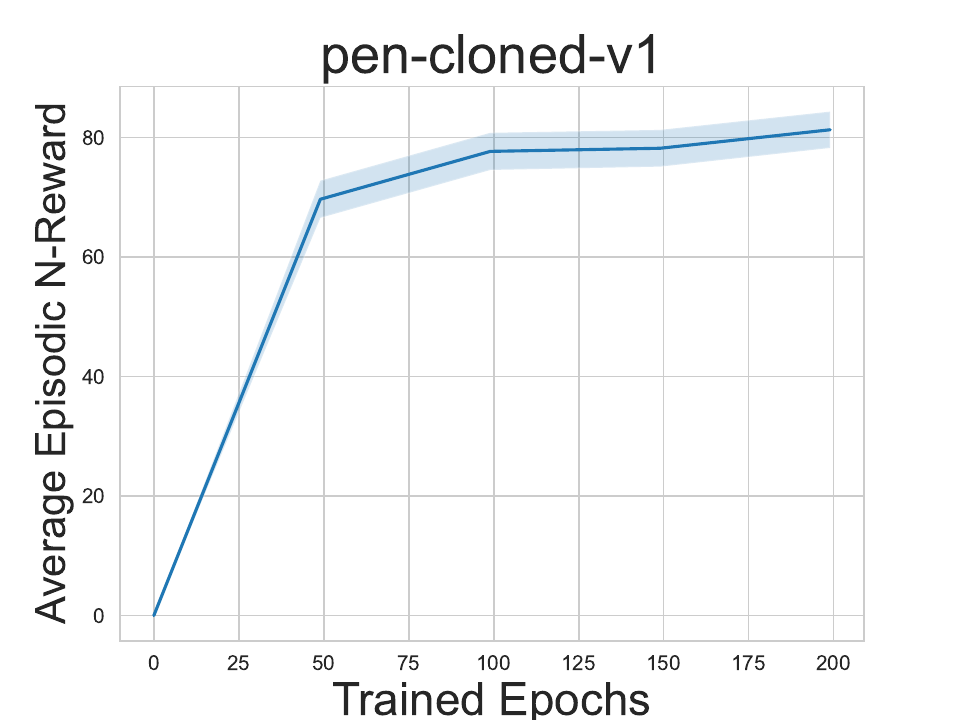}
    \end{minipage}
    \caption{Training curves. Rewards evaluated after every 50 epochs.}
\end{figure}

\end{document}